\documentclass[preprint,authoryear,12p,times]{elsarticle}
\usepackage[top=2cm, bottom=2cm, left=2cm, right=2cm]{geometry}
\usepackage{makecell}
\usepackage{booktabs}
\usepackage{amssymb,amsthm}
\usepackage{graphicx}
\usepackage{amsmath, bm}
\usepackage{mathrsfs}
\usepackage{algorithm, algorithmic}
\usepackage{multirow}
\usepackage{multicol}
\usepackage{balance}
\usepackage{setspace}
\usepackage{titlesec} 
\usepackage{comment}
\usepackage{subcaption}
\usepackage{caption}
\usepackage{marvosym} 
\usepackage{lineno}

\usepackage{xcolor}
\usepackage{pifont}
\usepackage[colorlinks=true,linkcolor=cyan,citecolor=orange]{hyperref}

\newtheorem{thm}{Theorem}
\newtheorem{lemma}{Lemma}

 \def\BibTeX{{\rm B\kern-.05em{\sc i\kern-.025em b}\kern-.08em
     \top\kern-.1667em\lower.7ex\hbox{E}\kern-.125emX}}

\def\MyMethod{U-Face} 

\def\FullMyMethod{Unsupervised Facial Attribute Controllable Editing} 
\def\FullMyAlgorithm{Alternating Iterative Disentanglement and Controllability} 
\def\MyAlgorithm{AIDC} 

\def\InterFaceGAN{InterFaceGAN}
\def\GANspace{GANSpace}
\def\sefa{SeFa}
\def\enjoyGAN{EnjoyGAN}
\def\MDSE{MDSE}
\def\AdaTans{AdaTrans} 
\def\SDFlow{SDFlow}

\def\B#1{\mathbf#1}
\def\C#1{\mathbb#1}
\begin{document} 

\begin{frontmatter}
  \title{U-Face: An Efficient and Generalizable Framework for Unsupervised Facial Attribute Editing via Subspace Learning}

  \author{Bo Liu$^{a,b}$, Xuan Cui$^{a,b}$, Run Zeng$^{a,b}$, Wei Duan$^{a,b}$, Chongwen Liu$^{a,b}$, Jinrui Qian$^{a,b}$, Lianggui Tang$^{a,b}$, Hongping Gan$^{c,*}$}
  \address[1]{Chongqing Key Laboratory of Intelligent Sensing and Blockchain Technology, Chongqing Technology and Business University, Chongqing, China}
  \address[2]{School of Artificial Intelligence, Chongqing Technology and Business University, Chongqing, China}
  \address[3]{School of Software, Northwestern Polytechnical University, Xi'an, China}

\begin{abstract}
Latent space-based facial attribute editing methods have gained popularity in applications such as digital entertainment, virtual avatar creation, and human-computer interaction systems due to their potential for efficient and flexible attribute manipulation, particularly for continuous edits. Among these, unsupervised latent space-based methods, which discover effective semantic vectors without relying on labeled data, have attracted considerable attention in the research community. However, existing methods still encounter difficulties in disentanglement, as manipulating a specific facial attribute may unintentionally affect other attributes, complicating fine-grained controllability. To address these challenges, we propose a novel framework designed to offer an effective and adaptable solution for unsupervised facial attribute editing, called \FullMyMethod\ (\MyMethod). The proposed method frames semantic vector learning as a subspace learning problem, where latent vectors are approximated within a lower-dimensional semantic subspace spanned by a semantic vector matrix. This formulation can also be equivalently interpreted from a projection-reconstruction perspective and further generalized into an autoencoder framework, providing a foundation that can support disentangled representation learning in a flexible manner. To improve disentanglement and controllability, we impose orthogonal {non-negative} constraints on the semantic vectors and incorporate attribute boundary vectors to reduce entanglement in the learned directions. Although these constraints make the optimization problem challenging, we design an alternating iterative algorithm, called \FullMyAlgorithm\ (\MyAlgorithm), with closed-form updates and provable convergence under specific conditions. Extensive experiments on multiple pre-trained GAN generators demonstrate that \MyMethod\ {shows improved} disentanglement, controllability, and visual fidelity compared to existing state-of-the-art supervised and unsupervised baselines. Specifically, \MyMethod\ reduces inter-attribute correlation {by an average of} $5$\%{-}$15$\%, and {improves} FID {by $10$\%{-}$20$\%} and LPIPS {by $5$\%{-}$10$\%}. {Compared to recent diffusion-based methods (DDS, CDS, FPE), \MyMethod\ achieves comparable editing quality and reduces inference times}, making it suitable for {real-time} applications across different GAN backbones and enabling large-scale interactive facial editing.

\end{abstract}

\begin{keyword}
  Facial attribute editing, Attribute boundary vector, Generative adversarial networks, Orthogonal {non-negative} constraint
\end{keyword}
\end{frontmatter}

\section{Introduction}
Facial attribute editing~\citep{GUO2024108683, REN2025127245} involves manipulating specific attributes of a facial image, such as age, gender, hairstyle, and expression, while preserving the overall identity and natural appearance of the face~\citep{TPAMI2024FGE}. This technique has found significant application in areas such as digital entertainment, virtual avatar generation, and human-computer interaction systems. Early studies on makeup transfer~\citep{SUN2024109346}, expression editing~\citep{Li.2012,TIAN20181}, gender transformation~\citep{Suo.2011}, and age progression~\citep{KemelmacherShlizerman.2014} relied on classical computer vision~\citep{PatchNet2013} and machine learning~\citep{Garrido.2014} techniques, and while significant progress has been made, challenges in achieving effective disentanglement and fine-grained control remain.\par
{In recent years, Generative Adversarial Networks (GANs) have made certain advancements in facial attribute editing.} Existing GAN-based methods are typically categorized into two main types: image space-based methods, which work directly in image space, and latent space-based methods, which operate in a learned latent space. Both have shown promising results, but each comes with its own set of challenges. Image space-based methods~\citep{AttGAN2019, StarGAN2019, StyleTrans2021, UVCGAN2022, TPAMI2023VecGAN, Huang2024SDGANDS, SHAO2021107311} leverage encoder-decoder architectures and have achieved promising results, but high computational costs and challenges in supporting continuous attribute editing remain significant limitations for many image space-based methods. In contrast, latent space-based methods exploit semantic vectors in the latent space of pre-trained GAN models (e.g., StyleGAN~\citep{StyleGAN2019}, ProGAN~\citep{ProGAN2018}, DFSGAN~\citep{YANG2023105519} and His-GAN~\citep{LI201931}), offering potentially more efficient and flexible attribute manipulation, particularly for continuous edits.\par
Latent space-based methods can be supervised or unsupervised depending on the use of labeled data. Supervised latent space-based methods~\citep{Van2021,Davis.2022,MDSE2023,InterFaceGAN2020,EnjoyGAN2021} rely on labeled data or pre-trained models to guide the training of semantic vectors. For example,~\cite{MaskFaceGAN2023} derived semantic vectors using an attribute classifier and a face parser, enabling focus on localized attribute editing. ~\cite{STGAN2019} used the CelebA dataset with attribute labels to learn semantic vectors for facial attribute editing. While these methods facilitate the efficient generation of high-quality semantic vectors, the reliance on labeled data can increase training costs and pose challenges in achieving effective disentanglement and fine-grained controllability.\par
In contrast, unsupervised latent space-based methods~\citep{LI2023103916,AdaTrans2023,AGE2022CVPR,TIP2022STIAWO,SEFA2021, DOGAN2020338} learn semantic vectors without relying on labeled data. {Although these methods demonstrate promising performance, the lack of explicit supervision often leads to entangled edits and limited controllability, which can pose challenges in certain real-world applications—such as virtual avatars and interactive entertainment—where fine control and flexibility are crucial. }{To address these issues, we propose U-Face, an unsupervised facial attribute controllable editing method. With its efficient unsupervised learning mechanism, U-Face operates stably in real-time systems, making it suitable not only for virtual avatars and interactive entertainment applications, but also highly promising for efficient AI processing in resource-constrained or IoT scenarios, such as 3D object detection~\citep{11048915}.} Our main contributions are as follows:
\begin{itemize}
  \item  We propose a novel framework, \FullMyMethod\ (\MyMethod), for unsupervised facial attribute editing. It substantially improves disentanglement and controllability while preserving high visual fidelity. In addition, \MyMethod\ provides a principled foundation for unsupervised disentangled representation learning, with the potential to be extended to more expressive neural network architectures such as autoencoders.

  \item  We formulate semantic vector learning as a subspace learning problem, where latent vectors are approximated within a semantic subspace. To strengthen disentanglement and controllability, we introduce orthogonal {non-negative} constraints that enforce independence and sparsity among semantic vectors, and incorporate attribute boundary vectors to further reduce entanglement in facial attribute manipulation.

  \item We design an efficient alternating optimization algorithm, \FullMyAlgorithm\ (\MyAlgorithm), which admits closed-form solutions for each subproblem, ensuring fast convergence with provable guarantees.
  \item We conduct extensive experiments across multiple pre-trained GANs, benchmarking against state-of-the-art unsupervised, supervised, and diffusion-based baselines using five evaluation metrics, including FID and LPIPS. Furthermore, we provide a comprehensive analysis covering identity consistency, parameter sensitivity, user studies, and visualizations of challenging cases.
\end{itemize}

The structure of this paper is organized as follows: Section~\ref{sec:RelWorks} reviews previous work on latent space-based facial image editing. Section~\ref{sec:method} presents the proposed \MyMethod\ framework. Section~\ref{sec:exp} and~\ref{sec51} provide experimental results and discussions, respectively. Section~\ref{sec:conclusion} concludes the study and outlines potential directions for future research. 
\section{Related Work}\label{sec:RelWorks}
\subsection{Supervised Latent Space-Based Facial Image Editing}
Supervised latent space-based methods leverage pre-trained models and labeled data to learn directions in the latent space that enable facial attribute manipulation. Recently, several methods have used labeled data to guide facial attribute editing. For example,~\cite{Van2021} incorporated human prior knowledge into a model to provide more information regarding the GAN latent space. To demonstrate the feasibility of using brain signals for image editing.~\cite{Davis.2022} recorded electroencephalogram responses from participants viewing specific semantic features, and used these responses as supervision signals to learn semantic features in the GAN latent space.\par
Additionally, obtaining labeled data from the latent space and calculating semantic vectors for editing facial attributes has been extensively studied. For example,~\cite{MDSE2023} utilized image data with attribute labels to construct an orthogonal subspace, where each subspace contained semantic vectors that edited specific facial attributes. These orthogonal semantic vectors effectively mitigated the entanglement problem during facial attribute editing. Similarly,~\cite{EnjoyGAN2021} trained a linear attribute regressor to predict facial attribute scores, improving disentanglement effects.~\cite{InterFaceGAN2020} trained a Support Vector Machine (SVM) classification hyperplane using image data with attribute labels, thereby enabling the separation of two completely opposing facial attributes for precise controlled editing. Furthermore, supervised deep learning techniques, such as pre-trained ResNet and Multilayer Perceptron (MLP), have been employed to train semantic vector models~\citep{IALS2021IJCAI, STGAN2019, Yao2021ICCV, MaskFaceGAN2023}.\par
Although these methods exhibit excellent performance in learning semantic vectors, they rely heavily on labeled data and pre-trained models. This dependency increases the complexity of training, but also introduces significant time overhead. Moreover, semantic vectors obtained using these methods suffer from limited controllability and entanglement problems.

\subsection{Unsupervised Latent Space-Based Facial Image Editing}
Unsupervised latent space-based methods for facial attribute editing eliminate the need for labeled data by discovering directions directly from the latent space. Some of these methods exploit the underlying structure of the latent space to manipulate facial attributes. For example,~\cite{ICLR2020Steer} employed data augmentation to explore the controllability of the latent space.~\cite{VoynovB20, Cherepkov.2021, ICLR2021Regres} identified semantically meaningful vectors and compositional attributes within GAN's latent space, enabling manipulation of synthetic image attributes.~\cite{ICLR2022Escape} proposed an unsupervised method called Local Basis, which leveraged the local geometry of the intermediate latent space in GANs to identify semantic decomposition directions.~\cite{TIP2022STIAWO} introduced a structure-texture independent architecture with weight decomposition and orthogonal regularization techniques to disentangle the latent space.~\cite{AGE2022CVPR} proposed an attribute group editing method using sparse dictionary learning to manipulate category-independent attributes while preserving category-dependent attributes globally.\par
Other studies have applied various unsupervised techniques, such as PCA, to discover semantic vectors in latent space. For example,~\cite{GANSpace2020} proposed an unsupervised facial image editing method that utilized PCA to compute semantic vectors in the latent space, allowing the manipulation of facial attributes such as expressions and age.~\cite{SEFA2021} achieved precise control and disentanglement of facial attribute editing by applying eigen decomposition to the weight matrix of a pre-trained GAN model.~\cite{AdaTrans2023} presented an adaptive nonlinear latent transformation to identify semantic vectors in the latent space using an unsupervised strategy. This method maximized the log-likelihood of the transformed latent vectors to facilitate the distribution transformations.~\cite{SDFlow2024ICASSP} utilized unsupervised conditional continuous normalizing flows to identify semantic variables for fine-grained attribute editing in the latent space of StyleGAN.\par 
However, without large-scale labeled data, existing unsupervised latent space-based facial attribute editing methods still suffer from severe attribute entanglement and limited controllability. To address these issues, we propose a novel unsupervised framework that integrates the orthogonal constraint with the facial attribute boundary vector. This framework enables the identification of semantic vectors that are both well disentangled and highly controllable, as supported by theoretical analysis and experimental results.\par

\subsection{Diffusion-based Facial Image Editing}
Diffusion models have recently gained considerable attention in facial attribute editing due to their strong generative capacity and controllability.~\cite{DiffusionCLIP2022} integrates CLIP guidance into diffusion models, enabling semantic attribute manipulation through text prompts without the need for retraining. \cite{Chen2023FaceAV} fine-tuned diffusion models with age-aware objectives to achieve accurate and realistic face aging while preserving identity. Building on this idea,~\cite{dds2023} proposed to directly adjust the denoising score function, achieving more precise attribute-level control in an efficient manner. More recently,~\cite{FPE2024} provided an in-depth analysis of how cross-attention and self-attention layers in diffusion models capture semantic correspondences, offering new insights and strategies for localized and disentangled editing. Similarly,~\cite{cds2024} introduced a contrastive denoising objective, improving the alignment between text guidance and visual edits while enhancing the fidelity of generated results.~\cite{Wang2024DiffFAEAH} introduced a one-shot diffusion-based framework with 3DMM-guided customization and semantic composition for high-fidelity facial appearance editing. {\cite{RetrievFace2025} proposed a text-guided method that combines retrieval-enhanced and diffusion models to achieve precise control and editing of facial attributes by retrieving relevant image features. \cite{Wei2025MagicFaceHF} proposed a diffusion model-based method for high-fidelity facial expression editing, enabling fine-grained, continuous, and interpretable expression manipulation while maintaining identity, pose, background, and facial detail consistency. \cite{hou2025zeroshotfaceeditingidattribute} proposed a zero-shot face editing method that uses a diffusion model to decouple identity and attribute features, enabling independent control over facial attributes while preserving identity consistency. \cite{DiT4Edit2025} proposed a Diffusion Transformer-based image editing framework that utilizes the DPM-Solver inversion algorithm and unified attention control to enhance the quality and speed of image editing.}\par

While they highlight the potential of diffusion models for high-quality and semantically aligned facial attribute editing, they generally involve heavy computation and slow inference due to iterative sampling. Moreover, their disentanglement ability remains limited when fine-grained, attribute-specific control is required without text supervision. In contrast, our proposed \MyMethod\ provides efficient and disentangled attribute editing with competitive visual fidelity, complementing the strengths of diffusion-based methods.

\section{Proposed Framework}\label{sec:method} 
This section first introduces a simple, effective and generalizable subspace learning framework for unsupervised semantic vector discovery in latent space facial attribute editing. To address the limitations in disentanglement and controllability, we propose a novel framework,~\MyMethod, as illustrated in Fig.~\ref{framework}. The objective function of~\MyMethod\ incorporates orthogonal {non-negative} constraints and attribute boundary vectors to enhance attribute disentanglement and controllability. To solve the proposed objective, we further develop an alternating iterative algorithm,~\FullMyAlgorithm\ (\MyAlgorithm). We further provide a theoretical guarantee of convergence for \MyAlgorithm\ and analyze its computational complexity. {Compared to related subspace-based methods: PCA/GANSpace extracts variance-maximizing axes without semantic anchoring and lacks a mechanism to reduce attribute coupling; SeFa derives global directions via eigen-decomposition of generator weights, but similarly lacks disentanglement-oriented joint constraints; InterFaceGAN learns independent SVM boundaries for each attribute and does not jointly control inter-attribute interference; although MDSE also incorporates orthogonality constraints, it decomposes the latent space into multiple orthogonal subspaces and optimizes at the subspace level, whereas U-Face optimizes only at the vector level, avoiding excessive decomposition of the subspace. In contrast to these existing methods, U-Face jointly optimizes a set of semantic vectors under explicit constraints, which helps reduce inter-attribute entanglement.}

\begin{figure*}[t]
  \centering
  \includegraphics[width=\linewidth]{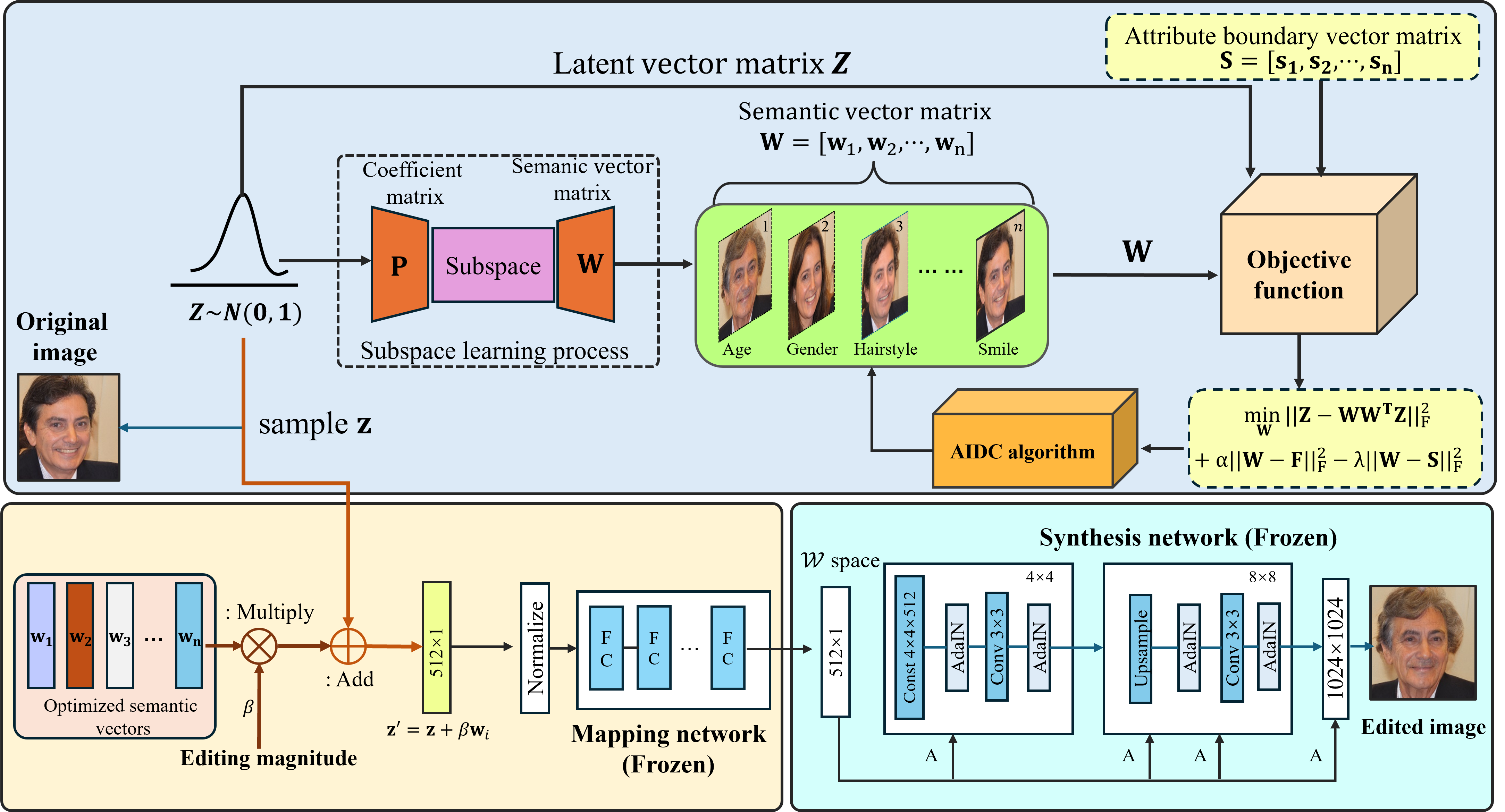}
  \vspace{-10pt}
  \caption{\FullMyMethod\ (\MyMethod) framework.}
~\label{framework}
\end{figure*}

\subsection{Latent Vector Subspace Learning}
~\label{sec31}
A \emph{semantic vector} in the latent space of a GAN refers to a direction along which facial attributes (e.g., adding a smile or glasses) can be manipulated. Specifically, a latent vector $\B{z} \in \mathbb{R}^m$ is sampled from the $m$-dimensional latent space $\mathcal{X}$ of a pre-trained GAN such as StyleGAN2~\citep{Karras.2020}. The generator $G$ then maps $\B{z}$ to the corresponding facial image. To manipulate a specific attribute, a semantic vector $\B{w}$ is added to the latent vector, and the manipulated image is obtained by $G(\B{z} + \B{w})$. Hence, identifying meaningful semantic vectors $\B{w}$ is crucial for effective facial attribute editing.\par

The subspace learning paradigm~\citep{SP-DiffFusion2025, OSR2025} is a widely used unsupervised learning method. 
Inspired by this idea, we propose a lightweight method for learning semantic vectors without the need for labeled data or complex neural architectures. 
The key idea is to approximate latent vectors within a lower-dimensional semantic subspace, thereby encouraging the learned semantic vector matrix to capture disentangled and controllable semantic directions.\par

Let $\B{Z} = [\B{z}_1, \B{z}_2, \ldots, \B{z}_n] \in \mathbb{R}^{m \times n}$ be a latent vector matrix composed of $n$ latent vectors sampled from the latent space of a GAN model. We aim to learn a semantic vector matrix $\B{W} = [\B{w}_1, \B{w}_2, \ldots, \B{w}_r] \in \mathbb{R}^{m \times k}$ together with a coefficient matrix $\B{P}\in\mathbb{R}^{k\times n} (k<m) $ such that $\B{Z}\approx \B{W}\B{P}$. This leads to the following subspace learning objective:
  \begin{equation}
    \label{classicPCA}
    \begin{split}
      & \min_{\B{W},\B{P}} \|\B{Z}-\B{W}\B{P}\|_\text{F}^2,\\
      &  s.t \quad \|\B{W}\|_\text{F} = 1,
    \end{split}
    \end{equation}
where the Frobenius norm $\|\cdot \|_\text{F}$ constraint prevents trivial solutions (e.g., $\B{W}=0$), with $\B{W}$ denoting the semantic vector matrix, whose columns span a low-dimensional semantic subspace and $\B{P}$ representing the low-dimensional codes of latent vectors, each column of $\B{P}$ specifies how a latent vector is expressed as a linear combination of semantic vectors. This view aligns with matrix factorization and dictionary learning, where latent vectors are represented in a low-dimensional semantic subspace spanned by $\B{W}$ with coefficients $\B{P}$.\par

Eq.~\eqref{classicPCA} can also be interpreted from the projection-reconstruction perspective: 
$\B{P}$ serves as the projected codes of latent vectors into a semantic subspace, while $\B{W}$ reconstructs the original latent vectors from these codes. 
Equivalently, the formulation can be regarded as a shallow linear autoencoder: 
$\B{P}$ functions as the encoder that maps latent vectors into low-dimensional representations, and $\B{W}$ acts as the decoder that reconstructs them. Such an interpretation naturally makes the framework extensible by replacing the linear encoder/decoder with nonlinear neural networks, it can be generalized into a deep autoencoder for disentangled representation learning~\citep{DRL2024}.\par

Since Eq.~\eqref{classicPCA} is a non-convex optimization problem, $\B{W}$ is typically solved via alternating optimization, yielding a local optimum. However, the semantic vectors obtained from this objective often exhibit attribute entanglement, i.e., a single semantic vector may simultaneously influence multiple attributes. As illustrated in Fig.~\ref{PCAVector}, applying $\B{w}_1, \ldots, \B{w}_5$ results in edited images where each semantic vector influences more than one attribute. For example, $\B{w}_1$ modifies glasses and age, while $\B{w}_2$ manipulates glasses and hairstyle. In addition, this entanglement substantially limits controllability in attribute manipulation.\par
To overcome these challenges, we propose an unsupervised facial attribute editing framework based on orthogonal {non-negative} constraints and attribute boundary vectors.\par

\textbf{Motivation and Intuition.}
{For clarity, we first present a concise conceptual overview of the design choice. Our goal is to learn $k$ semantic directions that behave like independent sliders in latent space: moving one slider (e.g., "Smile") should not unintentionally change another (e.g., "Age"). Three ingredients enable this:}
\begin{itemize}
  \item {\emph{Orthogonality ($\mathbf{W}^\top\mathbf{W}=\mathbf{I}$)} enforces independence across sliders. Geometrically, the columns of $\mathbf{W}$ are perpendicular; nudging one direction minimally projects onto the others, reducing cross-attribute interference.}
  \item {\emph{Non-negativity ($\mathbf{W}\!\ge\!\mathbf{0}$)} encourages parts-based, additive use of latent coordinates, which empirically yields sparser, more interpretable directions and reduces intra-vector mixing.}
  \item {\emph{Boundary vectors ($\mathbf{S}$)} serve as "do-not-lean-toward" references. Each $\mathbf{s}_i$ encodes the normal of a decision boundary for one attribute. Our regularizer $-\lambda\|\mathbf{W}-\mathbf{S}\|_{F}^{2}$ repels the learned directions from these boundaries, for example, helping a "Hairstyle" slider avoid entanglement with the "Gender” boundary.}
\end{itemize}
\begin{figure*}[t]
  \centering
  \includegraphics[width=\linewidth]{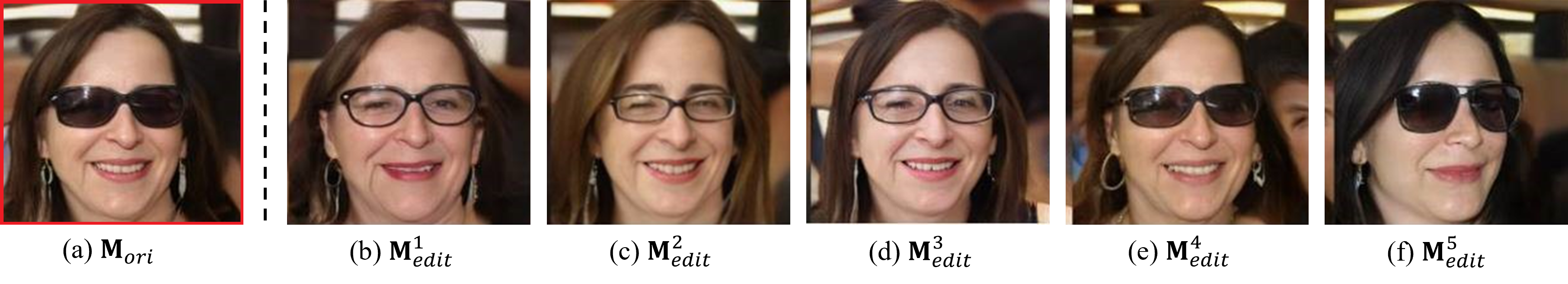}
  \vspace{-10pt}
  \caption{$\B{w}_1, \B{w}_2, \ldots, \B{w}_5$ are the five semantic vectors.~(a) shows the original facial image $\B{M}_{ori} = G(\B{z})$, and (b)-(f) show edited images $\B{M}_{edit}^i = G(\B{z} + \B{w}_i),i=1,2,\ldots,5$. Each semantic vector simultaneously manipulates multiple facial attributes: $\B{w}_1$ adjusts both glasses and age, $\B{w}_2$ manipulates glasses and hairstyle, and the others similarly influence various attributes.}
~\label{PCAVector}
\end{figure*}

\subsection{The Proposed Objective Function of \MyMethod}
To improve the disentanglement and controllability of semantic vectors obtained from Eq.~\eqref{classicPCA}, \MyMethod\ introduces orthogonal {non-negative} constraints into the objective. Furthermore, to enhance the disentanglement of attribute manipulation, we incorporate attribute boundary vectors into the formulation.

\subsubsection{Orthogonal {non-negative} Constraints}
Although Eq.~\eqref{classicPCA} provides a basic formulation for computing semantic vectors, its optimization is often prone to local minima and tends to produce entangled or overlapping directions. To mitigate this issue, we enforce orthogonality constraints among the column vectors of the semantic vector matrix $\B{W}$, ensuring that each semantic vector lies in an independent subspace. These constraints ensure that each semantic vector lies in an orthogonal subspace. Moreover, we impose a non-negativity condition, which encourages sparsity within each semantic vector and further reduces intra-vector entanglement, making the learned semantic vectors more interpretable and controllable:
\begin{equation}
\label{objFun111}
\begin{split}
  & \min_{\B{W},\B{P}} \|\B{Z}-\B{W}\B{P}\|_\text{F}^2,
  \\
  & s.t \quad \B{W}^\top\B{W}=\B{I},
\end{split}
\end{equation}
where $\B{I} \in \mathbb{R}^{k \times k}$ denotes the identity matrix. Since the orthogonality constraint $\B{W}^\top\B{W}=\B{I}$ in Eq.~\eqref{objFun111} prevents trivial solutions, we do not incorporate the constraint $\|\B{W}\|_\text{F} = 1$ used in Eq.~\eqref{classicPCA} into Eq.~\eqref{objFun111}.\par

While the orthogonality constraint improves disentanglement across semantic vectors, we further introduce sparsity to enhance intra-vector disentanglement by reducing interactions among the elements within each semantic vector. As stated in Theorem~1 of~\cite{KBS2024ANSC}, the matrix $\B{W}$ that satisfies the orthogonal {non-negative} condition tends to be sparse in its columns. Therefore, we only introduce a non-negativity constraint, $\B{W} \geq 0 $, into Eq.~\eqref{objFun111} and reformulate the objective of \MyMethod\ as follows:
  \begin{equation}
  \label{objFun2}
  \begin{split}
    & \min_{\B{W},\B{P}} \|\B{Z}-\B{W}\B{P}\|_\text{F}^2,
    \\
    & s.t \quad \B{W}^\top\B{W}=\B{I},\B{W}\geq \B{0}.
  \end{split}
  \end{equation}

\subsubsection{Attribute Boundary Vector}
To further enhance disentanglement and establish a direct correspondence between semantic vectors and specific facial attributes, we incorporate a reference direction termed the \textbf{attribute boundary vector}. Intuitively, each boundary vector encodes the decision hyperplane separating the presence and absence of an attribute in latent space. By encouraging semantic vectors to diverge from these boundaries, the learned directions become less entangled with correlated attributes and more aligned with the intended editing effect.

Each attribute boundary vector is associated with the corresponding semantic vector, possessing the following properties:
\begin{itemize}
  \item  The effect of facial attribute editing becomes more pronounced as the semantic vector moves in a direction perpendicular to the attribute boundary vector.
  \item If the semantic vector moves in the direction opposite of the attribute boundary vector, the corresponding attribute in the image is reversed. For example, an image with a smiling face may be transformed into an image with a crying face.
\end{itemize}

Therefore, the disentanglement and controllability of attributes can be enhanced by moving the semantic vector away from the corresponding attribute boundary vector. Simultaneously, the degree of manipulation can be precisely controlled by adjusting through the editing magnitude. For each facial attribute, the corresponding attribute boundary vector can be computed using the method proposed in~\cite{semanticBound2020}.\par
For the semantic vector matrix $\B{W} = {[\B{w}_1, \B{w}_2, \ldots, \B{w}_k]} \in \C{R}^{m \times k}$ and the corresponding $k$ attribute boundary vectors, with $m$ dimensions, we construct the attribute boundary vector matrix $\B{S} = {[\B{s}_1, \B{s}_2, \ldots, \B{s}_k]} \in \C{R}^{m \times k}$. The distance between the $i$-th semantic vector and the corresponding attribute boundary vector is defined as $ d_i = \|\B{w}_i - \B{s}_i\|_2^2$. The total distance, denoted by $d$, is computed as follows:
  {
  \begin{equation}
    \label{distanceSum}
    d=\sum_{i=1}^k d_i=\sum_{i=1}^k \| \B{w}_i-\B{s}_i \|_2^2=\|\B{W}-\B{S}\|_\text{F}^2.
  \end{equation}}

We incorporate the distance $d$ from Eq.~\eqref{distanceSum} as a regularization term in Eq.~\eqref{classicPCA} to encourage each semantic vector $\B{w}_i$ to move further away from the corresponding the attribute boundary vector $\B{s}_i$. Consequently, the objective of \MyMethod\ is reformulated as follows:
   
  \begin{equation}
  \label{objFun1}
  \begin{split}
    & \min_{\B{W},\B{P}} \|\B{Z}-\B{W}\B{P}\|_\text{F}^2- \lambda \|\B{W} - \B{S}\|_\text{F}^2,  \\
    & s.t \quad \B{W}^\top \B{W}=\B{I}, \B{W} \geq \B{0},
  \end{split}
  \end{equation}
where $\lambda>0$ is the regularization parameter. 
To minimize Eq.~\eqref{objFun1}, $-\| \B{W}-\B{S}\|_\text{F}^2$ should be as small as possible, encouraging the $i$-th semantic vector $\B{w}_i$ (i.e., the $i$-th column of $\B{W}$) to move further from the corresponding attribute boundary vector $\B{s}_i$ (i.e., the $i$-th column of $\B{S}$). This process enhances the disentanglement of the semantic vectors $\B{w}_i$.\par

The semantic vectors $\B{w}_i$ obtained from Eq.~\eqref{objFun1} allow for precise control over the magnitude of facial attribute manipulation. Specifically, for a given attribute editing magnitude $\beta$, the following cases are considered: (1) When $\beta > 1$, $\| \beta \B{w}_i - \B{s}_i \|_2 > \| \B{w}_i -\B{s}_i \|_2$, indicating that $\beta \B{w}_i$ moves further from $\B{s}_i$ than $\B{w}_i$, resulting in more significant editing results. (2) Similarly, when $0 < \beta < 1$, the editing effect is less significant, as $\beta \B{w}_i$ moves closer to $\B{s}_i$. (3) When $\beta < 0$, the editing effect of $\beta \B{w}_i$ is opposite to that of $\B{w}_i$. For example, if $\B{w}_i$ transforms a facial image into a smile, $\beta \B{w}_i (\beta < 0)$ generates a facial image with a crying expression~\citep{InterFaceGAN2020}. Therefore, the semantic vector $\B{w}_i$ facilitates precise control of facial attribute changes by adjusting the editing magnitude $\beta$.\par
In summary, the semantic vectors obtained by solving Eq.~\eqref{objFun1} offer the following advantages:
\begin{itemize}
  \item The orthogonal non-negativity constraint in the objective of \MyMethod\ promotes the sparsity of semantic vectors, further mitigating the entanglement issue.
  \item The semantic vectors for facial attribute editing that are distant from the attribute boundary vectors not only mitigate entanglement issues but also improve controllability.
\end{itemize}

\subsection{\FullMyAlgorithm\ Algorithm }
While introducing orthogonal and non-negativity constraints (Eq.~\eqref{objFun2}) substantially improves disentanglement and controllability, it also makes the optimization problem non-trivial. Existing off-the-shelf solvers are either computationally expensive or provide no convergence guarantee. To this end, we design a lightweight alternating iterative algorithm, termed \FullMyAlgorithm\ (\MyAlgorithm), which exploits closed-form updates for each subproblem. This design not only ensures provable convergence but also achieves significantly lower computational overhead compared to general-purpose optimization methods, making the framework practical for large-scale and {real-time} facial editing.
 Specifically, we introduce an auxiliary variable $\B{F}$ to transfer the non-negativity constraint $\B{W} \geq 0$ to $\B{F} \geq 0$. Subsequently, $\B{W}$ is aligned with $\B{F}$ by incorporating a regularization term into Eq.~\eqref{objFun1} as follows:
  \begin{equation}
    \label{objFun3}
    \begin{split}
      &\displaystyle \min_{\B{W},\B{P},\B{F}} \|\B{Z}-\B{W}\B{P}\|_\text{F}^2+\alpha\|\B{W}-\B{F}\|_\text{F}^2-\lambda \|\B{W}-\B{S}\|_\text{F}^2, \\
      & s.t \quad \B{W}^\top\B{W}=\B{I},\B{F}\geq \B{0},
    \end{split}
  \end{equation}
where $\alpha>0$ represents a regularization parameter that controls the degree of equivalence between $\B{F}$ and $\B{W}$. A larger $\alpha $ encourages $\B{F}$ closer to $\B{W}$, and vice versa.\par

The alternating iterative algorithm \MyAlgorithm\ is described below.\par
\subsubsection{Updating $\B{W}$ with $\B{P}$ and $\B{F}$ Fixed}
When $\B{P}$ and $\B{F}$ are fixed, Eq.~\eqref{objFun3} depends solely on $\B{W}$ and can be expressed as:
  {
  \begin{equation}
      \label{updateW}
      \begin{split}
        &\displaystyle \min_{\B{W}} \|\B{Z}-\B{W}\B{P}\|_\text{F}^2+\alpha\|\B{W}-\B{F}\|_\text{F}^2-\lambda \|\B{W}-\B{S}\|_\text{F}^2, \\
        & s.t \quad \B{W}^\top\B{W}=\B{I}.
      \end{split}
  \end{equation}}

Using the Theorem~\ref{thm:equivalent} presented in~\ref{appendixA}, we obtain the equivalent form of Eq.~\eqref{updateW} as follows:

  {
  \begin{equation}
    \label{derive_W}
  \begin{split}
  & \min_{\B{W}} \|\B{W}-\left(\B{Z}\B{P}^\top+\alpha \B{F}-\lambda \B{S}\right) \|_\text{F}^2, \\
  & s.t \quad \B{W}^\top\B{W}=\B{I}. \\
  \end{split}
  \end{equation}}

Eq.~\eqref{derive_W} corresponds to an Orthogonal Procrustes Problem (OPP) and has a closed-form solution derived from Proposition 1 in~\cite{SOCFS_CVPR_2015}
  \begin{equation}
    \label{calculate_W}
    \B{W}=\B{U}\B{I}_{m,k}\B{V}^\top,
  \end{equation}
where $\B{U} \in \C{R}^{m\times m}$ and $\B{V}^\top\in \C{R}^{k\times k}$ are the left and right {singular vectors} of $ (\B{Z}\B{P}^\top+\alpha \B{F}-\lambda \B{S})\in \C{R}^{m\times k}$, respectively. Both matrices are column orthogonal and satisfy $\B{U}^\top\B{U} = \B{I}_{m\times m}$ and $\B{V}^\top\B{V} = \B{I}_{k\times k}$, where $\B{I}_{m\times m}$ and $\B{I}_{k\times k}$ are the identity matrices of dimensions $m\times m$ and $k\times k$, respectively. 

\subsubsection{Updating $\B{P}$ with $\B{W}$ and $\B{F}$ Fixed}
When $\B{W}$ and $\B{F}$ are fixed, the objective in Eq.~\eqref{objFun3} is reduced to:
  \begin{equation}
  \label{object_P}
  \begin{split}
  &\displaystyle \min_{\B{P}} \|\B{Z}-\B{W}\B{P}\|_\text{F}^2. \\
  \end{split}
  \end{equation}

By taking the derivative of Eq.~\eqref{object_P} with respect to $\B{P}$ and setting it to zero:
  {
  \begin{equation}
    \label{calculate_P0}
    \begin{split} 
      & \B{P}^\top \B{W}^\top \B{W} - (\B{W}^\top \B{Z} )^\top=0 \\
      & \Longrightarrow  \B{W}^\top \B{W} \B{P}=(\B{W}^\top \B{Z} ). \\
    \end{split}
  \end{equation}}

Since $\B{W}^\top \B{W} =\B{I}_{k\times k}$, the coefficient matrix $\B{P}$ in Eq.~\eqref{calculate_P0} admits the following closed-form solution:
  \begin{equation}
    \label{calculate_P}
    \B{P}=\B{W}^\top \B{Z}.
  \end{equation}

\subsubsection{Updating $\B{F}$ with $\B{W}$ and $\B{P}$ Fixed}
With fixed $\B{W}$ and $\B{P}$, the sub-problems for $\B{F}$ are formulated as follows:
   
  \begin{equation}
      \label{objFunForF}
    \begin{split}
      &\displaystyle \min_{\B{F}} \|\B{W}-\B{F}\|_\text{F}^2, \\
      & s.t \quad \B{F}\geq \B{0}.
    \end{split} 
  \end{equation}

 Eq.~\eqref{objFunForF} can be directly computed as~\cite{SOCFS_CVPR_2015}:
   
  \begin{equation}
  \label{calculate_F}
    \B{F}=\frac{1}{2}\left(\B{W}+|\B{W}|\right),
  \end{equation}
where $|\cdot|$ represents the element-wise absolute value of the matrix $\B{W}$.\par
The proposed \MyAlgorithm\ algorithm for solving Eq.~\eqref{objFun3} is detailed in Algorithm~\ref{alg_1}.
\begin{algorithm}[h]
  \caption{Alternating Iterative Disentanglement and Controllability (\MyAlgorithm) Algorithm}\label{alg_1}
  \setstretch{1.1}
  \begin{algorithmic}[1]
  \REQUIRE
  The latent vector matrix $\B{Z} \in \C{R}^{m\times k}$ and the attribute boundary vector matrix $\B{S} \in \C{R}^{m \times k}$, the number of iterations $n=30$, the regularization parameter $\alpha=0.5$ and $\lambda=1$.
  \ENSURE
  The semantic vectors matrix $\B{W}={[w_1, w_2, \ldots, w_k]} \in \C{R}^{m \times k} $.
  \STATE~\textbf{Initialization:} randomly initializing $\B{W}^1$ and $\B{F}^1$, $\B{P}^1=(\B{W}^1)^\top\B{Z}$
  \WHILE{$1 \leq t \leq n$}
  \STATE~When $\B{P}^{t}$ and $\B{F}^{t}$ are fixed, update $\B{W}^{t+1}=\B{U}\B{I}_{m,k}\B{V}^\top$ by Eq.~\eqref{calculate_W}, where $\B{U},\B{V}^\top$ are left and right eigenvectors of $(\B{Z}(\B{P}^t)^\top+\alpha \B{F}^t-\lambda \B{S})$.
  \STATE~When $\B{W}^{t+1}$ and $\B{F}^{t}$ are fixed, updating $\B{P}^{t+1}=(\B{W}^{t+1})^\top\B{Z}$ by Eq.~\eqref{calculate_P}.
  \STATE~When $\B{W}^{t+1}$ and $\B{P}^{t+1}$ are fixed, updating $\B{F}^{t+1}=\frac{1}{2}\left(\B{W}^{t+1}+|\B{W}^{t+1}|\right)$ by Eq.~\eqref{calculate_F}.
  \STATE~$t=t+1$
  \ENDWHILE
  \STATE~$\B{W}=\B{W}^{n+1}$
  \end{algorithmic}
\end{algorithm}
\subsection{Convergence and Time Complexity Analysis of \MyAlgorithm\ }

\subsubsection{Convergence Guarantees of \MyAlgorithm}
\label{subsec:aidc_convergence}
We define the objective as:
\begin{equation}
\label{JWEq}
J(\B{W},\B{P},\B{F})=\|\B{Z}-\B{W}\B{P}\|_\text{F}^2+\alpha\|\B{W}-\B{F}\|_\text{F}^2-\lambda\|\B{W}-\B{S}\|_\text{F}^2.
\end{equation}\par
{We analyze the convergence of \MyAlgorithm\ for the objective $J$ in Eq.~\eqref{JWEq} under the constraints
$\B{W}^\top\B{W}=\B{I}$ and $\B{F}\ge \B{0}$. Each iteration performs exact block minimization over $\B{W}$, $\B{P}$, and $\B{F}$, and each block in \MyAlgorithm\ admits an exact closed-form minimizer:
(i) the $\B{W}$-update is an orthogonal procrustes problem solved by the thin SVD,
(ii) $\B{P}=\B{W}^\top \B{Z}$ (least squares with $\B{W}^\top\B{W}=\B{I}$),
and (iii) $\B{F}=\max(\B{W},\B{0})=\tfrac12\bigl(\B{W}+|\B{W}|\bigr)$ (Euclidean projection onto the nonnegative orthogonality).
These exact block updates imply a non-increasing objective and lead to the following result.}

\begin{thm}[Monotone descent and first-order stationarity]\label{thm:Convergence}
  {Let $\{(\B{W}^t,\B{P}^t,\B{F}^t)\}$ be the sequence generated by \MyAlgorithm\ with $\alpha>0$ and $\lambda\ge 0$. Then the objective values are 
  non-increasing:}
\begin{equation}
{J(\B{W}^{t+1},\B{P}^{t+1},\B{F}^{t+1}) \le J(\B{W}^{t},\B{P}^{t},\B{F}^{t})\quad\text{for all } t,}
\end{equation}
{and the sequence $J(\B{W}^{t},\B{P}^{t},\B{F}^{t})$ converges.
Moreover, every accumulation point is a first-order stationary point in Eq.~\eqref{JWEq}}.\par
\textit{Proof sketch.}
{Exact minimization of each block yields $J(\B{W}^{t+1},\B{P}^{t+1},\B{F}^{t+1}) \le J(\B{W}^{t},\B{P}^{t},\B{F}^{t})$, hence non-increasing sequence of objective values.
Since $\mathbf W$ lies on the Stiefel manifold (a compact set), $\alpha>0$ makes $\alpha\|\mathbf W-\mathbf F\|_F^2$ coercive in $\mathbf F$, and the least-squares update keeps $\mathbf P$ bounded, the objective $J$ is lower bounded. Hence $J(\cdot)$ converges. Full details are provided in~\ref{appendixB}.} {Note that the objective function is non-convex due to the nonlinear constraints and the problem structure. Therefore, we cannot guarantee the existence of a global optimum for the objective function. Our analysis shows that the objective value $J(\cdot)$ converges to a finite limit, and every accumulation point of the iterative sequence is a first-order stationary point (e.g., satisfies the KKT conditions). We emphasize that, for non-convex problems, the convergence of the iterations themselves is generally not guaranteed; the sequence may oscillate, and only subsequential convergence can be ensured under the stated assumptions.}
\end{thm}

\subsubsection{Time Complexity Analysis}
The time complexity of \MyAlgorithm\ comprises the following three parts:
\begin{itemize}
  \item When $\B{P}$ and $\B{F}$ are fixed, updating $\B{W}$ using Eq.~\eqref{calculate_W} requires performing Singular Value Decomposition (SVD) on matrix $ (\B{Z}\B{P}^\top+\alpha \B{F}-\lambda \B{S})\in \C{R}^{m\times k}$, where $m$ represents the dimension of the latent vectors, typically set to $512$, and $k$ denotes the number of semantic vectors. Since $m \gg k$, the time complexity of SVD is $O(m^2k)$. The computation of $\B{W}=\B{U}\B{I}_{m,k}\B{V}^\top$ involves three matrix multiplications, where $\B{U} \in \mathbb{R}^{m\times m}$, $\B{V}^\top\in \mathbb{R}^{k\times k}$ and $\B{I}_{m,k} \in \mathbb{R}^{m\times k}$ is a diagonal matrix. Therefore, the time complexity is $O(mk^2)$. Thus, the overall time complexity of this update step is $O(m^2k)+O(mk^2)$.
  \item When $\B{W}$ and $\B{F}$ are fixed, updating $\B{P}$ using Eq.~\eqref{calculate_P}, where $\B{Z} \in \C{R}^{m\times k}$ and $\B{W} \in\C{R}^{m\times k}$, results in an overall time complexity $O(mk^2)$.
  \item When $\B{W}$ and $\B{P}$ are fixed, updating $\B{F}$ using Eq.~\eqref{calculate_F} results in an overall time complexity $O(mk)$.
\end{itemize}

In summary, the overall time complexity of \MyAlgorithm\ is $O(m^2k)+O(mk^2)$. Given $m \gg k$, the complexity of the \MyAlgorithm\ approximates $O(m^2)$. The computational cost of \MyAlgorithm\ primarily depends on the dimension $m$ of the latent vector, which is typically fixed at $512$. This makes \MyAlgorithm\ both efficient and scalable, as it is relatively insensitive to the number of sampled latent vectors. Consequently, the proposed framework \MyMethod\ is well-suited for large-scale and {real-time} applications.
\section{Experimental Validation}
~\label{sec:exp} 
To comprehensively validate the effectiveness of \MyMethod, we design a systematic set of experiments encompassing both objective and subjective evaluations. The validation is organized into four aspects: (1) disentanglement evaluation, (2) controllability of attribute manipulation, (3) image quality assessment, (4) in-depth discussions, including identity consistency, parameter sensitivity analysis, user study and visualization of challenging cases. This holistic design ensures that both the technical soundness and the practical reliability of \MyMethod\ are rigorously assessed against state-of-the-art supervised, unsupervised, and diffusion-based baselines.

\subsection{Experimental Details and Reproducibility} 
\label{sec:exp_details}
{This section details the hardware/software environment, latent spaces, sampling protocol, initialization and update rules, hyperparameter selection, and evaluation pipeline to facilitate reproducibility.}

{\textbf{Hardware \& Software.}}
\begin{itemize}
    \item {\textbf{GPU:} NVIDIA A100 80GB (PCIe)}; {\textbf{CPU:} Intel Xeon Platinum 8358}
    \item {\textbf{OS:} Ubuntu 18.04.1 LTS}; {\textbf{Framework:} PyTorch~1.10 {(CUDA~11.3, cuDNN~8.2)}}
    \item {\textbf{Determinism:} Seeds are fixed and deterministic execution is enabled (cuDNN non-deterministic algorithms turned off).}
\end{itemize}

{\textbf{Latent space and sampling.}}
{We evaluate three pre-trained GAN backbones: StyleGAN~\citep{StyleGAN2019}, StyleGAN2~\citep{Karras.2020}, and ProGAN~\citep{ProGAN2018}. Among them, StyleGAN and StyleGAN2 are pre-trained on the FFHQ dataset, while ProGAN is pre-trained on the CelebA-HQ dataset.}
{For StyleGAN/StyleGAN2 we operate in the \emph{W} space (512-d; truncation $\psi{=}1.0$); for ProGAN we use its 512-d latent space. We sample $n{=}10{,}000$ latent vectors to form $\B{Z}\!\in\!\mathbb{R}^{m\times n}$ with $m{=}512$. The semantic vector matrix is $\B{W}\!\in\!\mathbb{R}^{m\times k}$ with $k{=}5$ (Age, Gender, Hairstyle, Pose, Smile), and coefficients $\B{P}\!\in\!\mathbb{R}^{k\times n}$.}\par
{\textbf{Attribute boundaries.} We adopt semantic boundaries for the five attributes.}
{Each boundary vector $\B{s}_i$ is $\ell_2$-normalized before forming $\B{S}=[\B{s}_1,\ldots,\B{s}_5]$. The regularizer weight $\lambda$ controls the influence of $\B{S}$; setting $\lambda{=}0$ reduces to our orthogonal-nonnegative subspace baseline.}

{\textbf{Initialization and updates (Algorithm~\ref{alg_1}).} {We follow Algorithm.~\ref{alg_1}.  $\B{W}^{(1)}$ is initialized by the top-$k$ thin-SVD left singular vectors of $\B{Z}$; $\B{F}^{(1)}=\tfrac12(\B{W}^{(1)}+|\B{W}^{(1)}|)$; $\B{P}^{(1)}=(\B{W}^{(1)})^\top\B{Z}$. Each iteration solves the OPP via thin SVD and sets $\B{W}=\B{U}\B{V}^\top$. We stop after $30$ iterations or when the relative objective drops $<10^{-6}$.}}

{\textbf{Hyperparameters and selection.}}
{Unless otherwise stated, we set $\alpha{=}0.5$ and $\lambda{=}1.0$. Values are selected via 5-fold cross-validation on the latent list, optimizing the average of FID and LPIPS; seeds are fixed across folds. For the controllability evaluation (Section~\ref{sec:controllability}), the editing range for the Age, Smile, and Hairstyle attributes is set as $\beta \in \{-0.3, -0.2, 0.2, 0.3\}$, while for the Gender attribute, $\beta$ takes values in $\{-0.3, -0.1, 0.1, 0.3\}$. For identity consistency analysis (\ref{sec:identity_consistency}), we set $\beta$ values within the range of [0.1, 0.8] and observe the changes in identity consistency.}

{\textbf{Evaluation pipeline.}}
{For each attribute $k$, we generate $100$ original/edited pairs
$\{M_{\text{ori}}^j(k),\,M_{\text{edit}}^j(k)\}_{j=1}^{100}$.
Images are center-cropped and resized to $256$ px for consistency.
All reported numbers are the mean of three seeded runs.}
{Definitions and implementations of the evaluation metrics are provided in Section~\ref{sec:ME}; the disentanglement correlation is computed as in Eq.~\eqref{corrcompute}.}

{\textbf{Fairness controls.}}
{To ensure a fair comparison with baselines (see Section~\ref{sec:BaseMethod}), all methods are run on the same A100-80GB under identical software/parameter settings and a unified evaluation protocol. GAN baselines follow authors' defaults; diffusion baselines use authors' code with their recommended sampling steps, guidance scales, and resolution. Latency is measured over $100$ images and excludes model initialization time.}

\subsection{Metrics for Evaluation}
\label{sec:ME}
To quantitatively evaluate the disentanglement achieved by the proposed \MyMethod, we calculate the Pearson correlation coefficient~\citep{MDSE2023} between the $i$-th and $k$-th attributes for the pre-trained StyleGAN2. A lower correlation coefficient indicates better disentanglement performance.\par
We employ four widely used metrics in facial attribute editing to evaluate the quality of the edited facial images: (1) Fréchet Inception Distance (FID)~\citep{NIPS2017FID}, (2) Learned Perceptual Image Patch Similarity (LPIPS)~\citep{CVPR2018LPIPS}, (3) Structural Similarity Index (SSIM)~\citep{TIP2004SSIM}, and (4) Inception Score (IS)~\citep{NIP2016IS}. \par
The values of LPIPS and SSIM range from $0$ to $1$, whereas the FID and IS do not have fixed ranges. Higher SSIM and IS values indicate greater structural similarity between images, along with clearer and more diverse outputs. Conversely, lower LPIPS and FID values suggest that the edited images align more closely with the original images in terms of feature distribution and visual perception, reflecting more natural and high-quality editing effects. In our experiments, the best metric results are highlighted in bold, while the second-best results are underlined.\par

\subsection{Baseline Methods}
\label{sec:BaseMethod}
We compare \MyMethod\ against ten state-of-the-art facial attribute editing methods, including three supervised GAN-based methods (\InterFaceGAN~\citep{InterFaceGAN2020}, \enjoyGAN~\citep{EnjoyGAN2021}, and \MDSE~\citep{MDSE2023}), four unsupervised GAN-based methods (\GANspace~\citep{GANSpace2020}, \sefa~\citep{SEFA2021}, \AdaTans~\citep{AdaTrans2023}, and \SDFlow~\citep{SDFlow2024ICASSP}), and three diffusion-based methods (DDS~\citep{dds2023}, CDS~\citep{cds2024}, and FPE~\citep{FPE2024}). All baseline methods are implemented using the default model configurations released in their respective papers. Moreover, \MyMethod\ shares methodological similarities with \GANspace, \InterFaceGAN, \sefa, and \MDSE\ in the way semantic vectors are computed.

\subsection{Disentanglement Evaluation}
In this section, we evaluate the disentanglement performance of the semantic vectors computed by \MyMethod, through both quantitative and qualitative evaluations on pre-trained GAN models and diffusion models.\par

\subsubsection{Qualitative Evaluation of \MyMethod's Disentanglement}
{We qualitatively compare the disentanglement performance of \MyMethod\ with \sefa, \InterFaceGAN, \GANspace, \enjoyGAN, \MDSE, and \AdaTans\ on StyleGAN2 and StyleGAN, focusing on Age, Gender, Hairstyle, and Smile attributes. The results are shown in Fig.~\ref{fig:stylegan2}, Fig.~\ref{fig:stylegan}, and Fig.~\ref{fig:diffusion}. For the Age attribute (Fig.~\ref{fig:stylegan2}, first row), only \MyMethod\ edits the attribute without affecting others, while other methods unintentionally alter non-target attributes (e.g., \sefa\ adds a hat or \enjoyGAN\ removes forehead decorations). A similar pattern is seen for Gender and Hairstyle edits (Fig.~\ref{fig:stylegan2}, second and third rows), where \MyMethod\ isolates the target attribute, while others alter skin tone, nose size, or add a smile. When editing the Smile attribute (Fig.~\ref{fig:stylegan2}, last row), only \MyMethod\ and \AdaTans\ successfully add a smile without affecting other facial features.}

{Results on StyleGAN (Fig.~\ref{fig:stylegan}) are consistent with those on StyleGAN2, further confirming \MyMethod's superior disentanglement. For example, while \sefa\ and \InterFaceGAN\ can edit Gender effectively, they distort facial features, causing identity inconsistencies. Similarly, \GANspace\ inadvertently removes eyeglasses while editing Hairstyle. We also compare \MyMethod\ with several diffusion-based methods (Fig.~\ref{fig:diffusion}). While diffusion-based methods reduce entanglement, methods like DDS, CDS, and FPE often produce less realistic edits. In contrast, \MyMethod\ maintains higher perceptual fidelity and more natural results. Overall, GAN-based methods suffer from attribute entanglement, whereas diffusion-based methods, though reducing entanglement, produce less realistic edits. \MyMethod\ effectively mitigates entanglement while preserving the naturalness of the original image.}

\begin{figure*}[h]
  \centering
  \begin{subfigure}{0.49\linewidth}
    \centering
    \includegraphics[width=\linewidth]{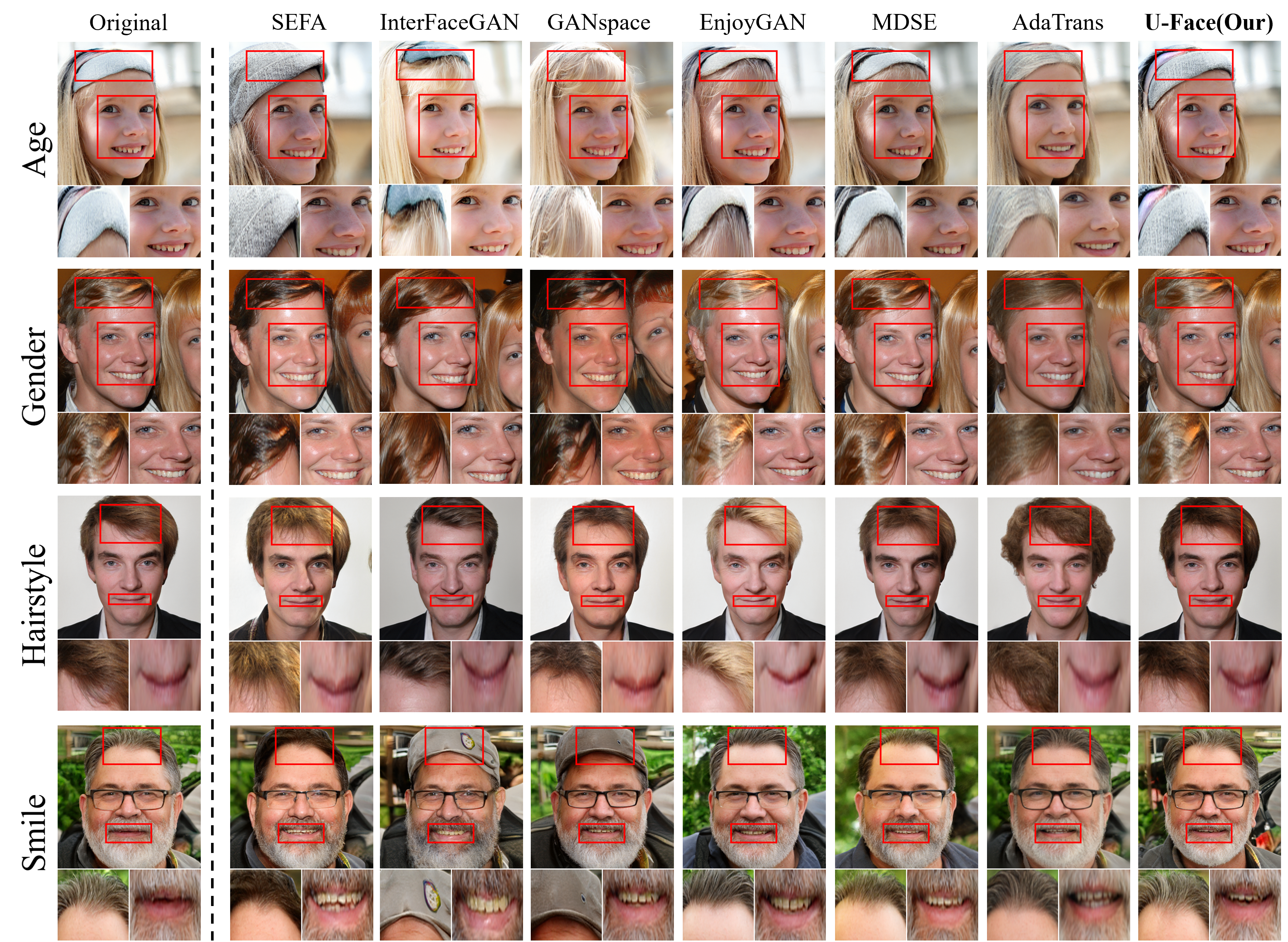}
    \caption{Result on StyleGAN2}
    \label{fig:stylegan2}
  \end{subfigure}
  \hfill
  \begin{subfigure}{0.49\linewidth}
    \centering
    \includegraphics[width=\linewidth]{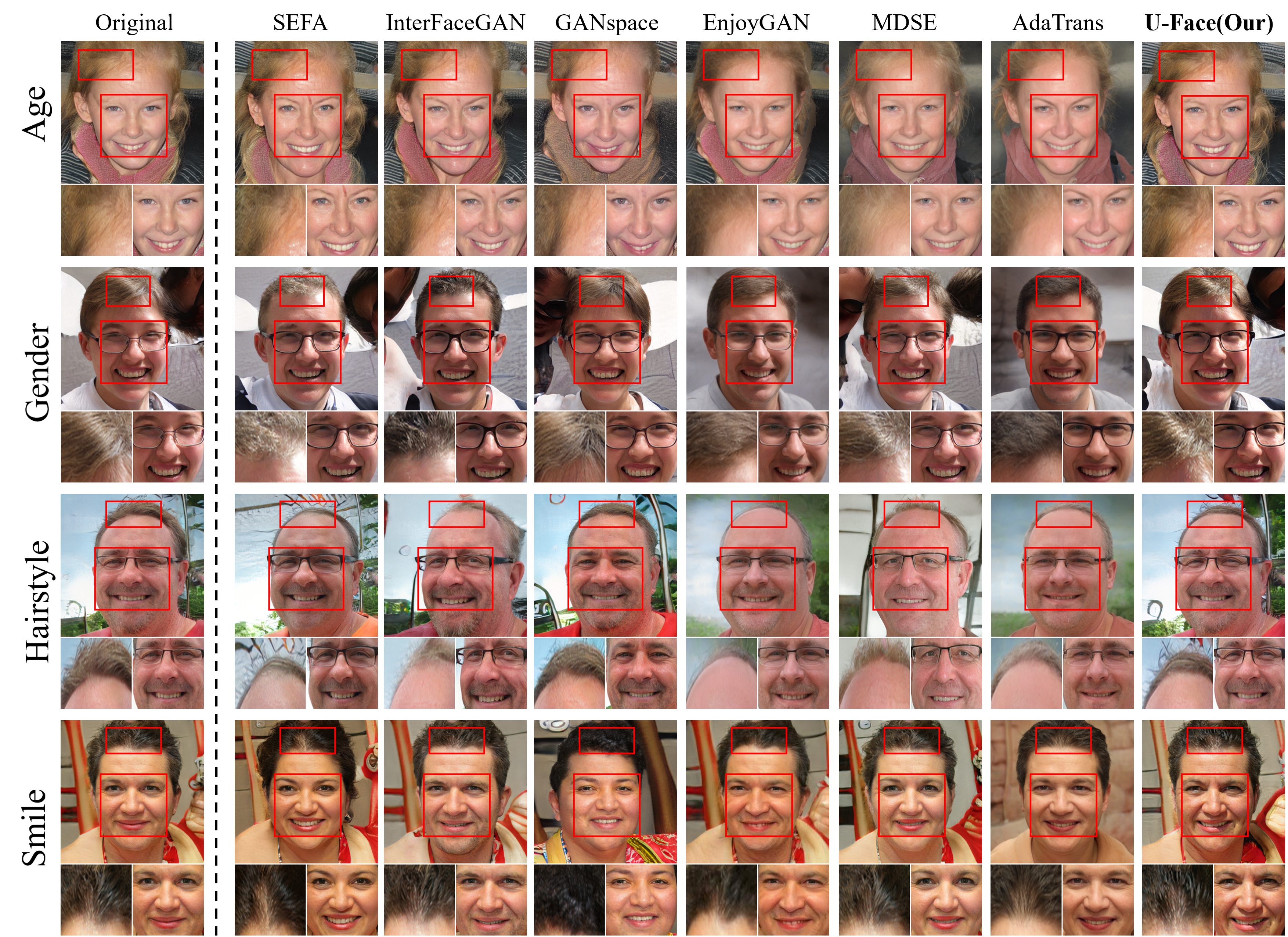}
    \caption{Result on StyleGAN.}
    \label{fig:stylegan}
  \end{subfigure}
  \caption{Qualitative comparison of \MyMethod\ and GAN-based baseline methods on StyleGAN and StyleGAN2 across the Age, Gender, Hairstyle, and Smile attributes.}
  \label{fig:stylegan_stylegan2}
\end{figure*}
\begin{figure*}[h]
  \centering 
  \includegraphics[width=0.5\linewidth]{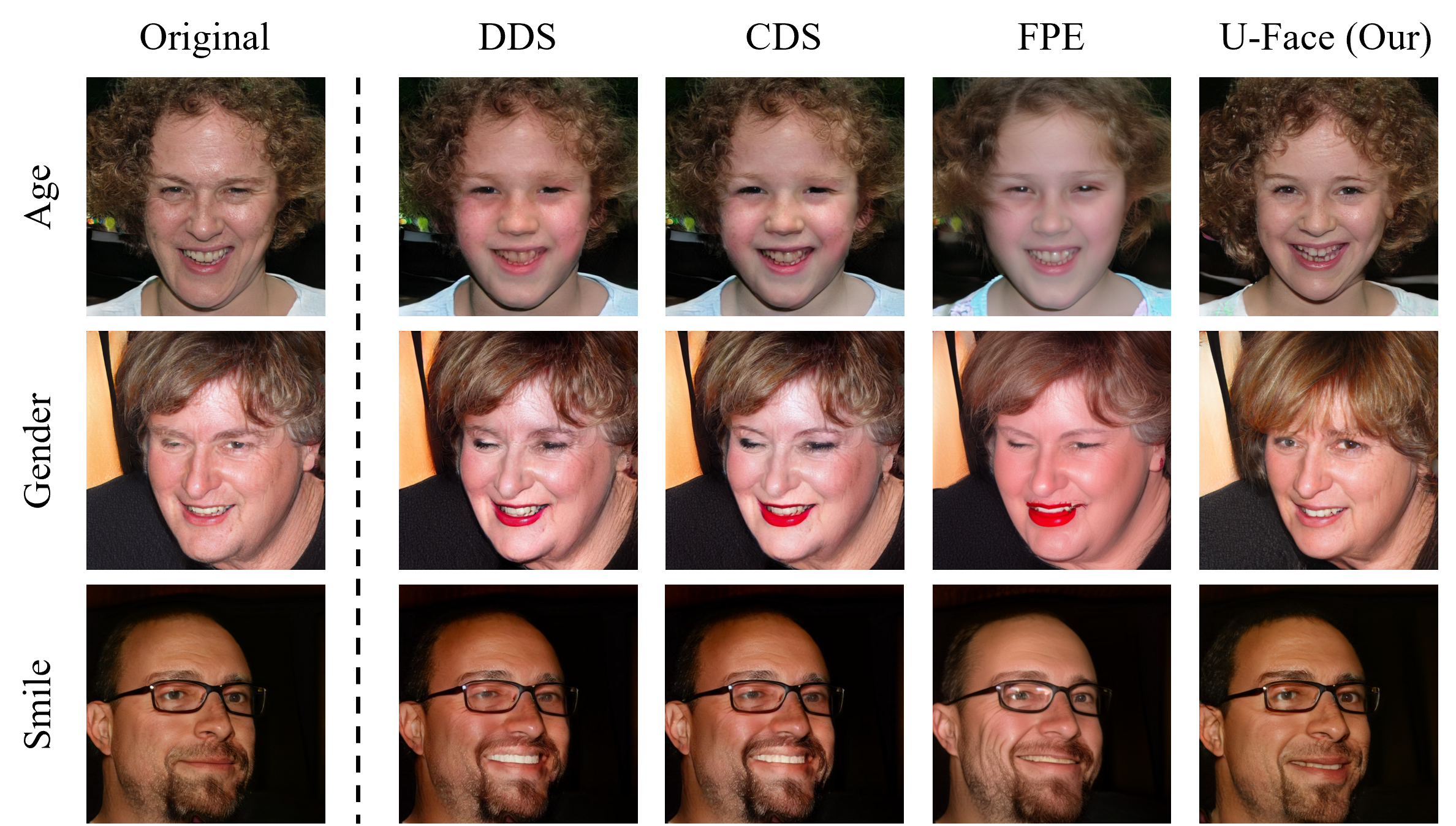}
  \caption{Qualitative evaluation of \MyMethod\ and diffusion-based baseline methods across on the Age, Gender, and Smile attributes.}
  \label{fig:diffusion}
\end{figure*}

\subsubsection{Quantitative Evaluation of \MyMethod's Disentanglement}
{We evaluate the disentanglement performance of semantic vectors computed by \MyMethod\ using the Pearson correlation coefficient, compared to \InterFaceGAN, \sefa, and \MDSE\ (details in~\ref{sec:corrcompute}). The results in Tables~\ref{tab:disentanglement1} and~\ref{tab:disentanglement2} show that \MyMethod\ achieves lower correlation coefficients across most attribute pairs, indicating better disentanglement. Specifically, \MyMethod\ outperforms \InterFaceGAN\ by $2\%$ to $23\%$ (except for gender-age), and outperforms \MDSE\ by $7\%$ to $10\%$ (except for pose-smile and pose-age). It also shows a $2\%$ to $40\%$ improvement over \GANspace\ and, although \sefa\ shows improvement for age-smile, it generally has higher correlations than \MyMethod\ for other pairs, with increases of $9\%$ to $22\%$.}

{We also compute the average Pearson correlation (excluding the diagonal) for each method. \MyMethod\ significantly reduces the correlation by $10\%$ to $15\%$ compared to \InterFaceGAN\ and \sefa, $3\%$ to $5\%$ compared to \MDSE, and $11\%$ to $36\%$ compared to \GANspace. In summary, \MyMethod\ effectively addresses semantic vector entanglement through attribute boundary vectors and orthogonal non-negative constraints, leading to high-quality facial attribute editing with fewer unintended edits and a lower Pearson correlation coefficient.}

\begin{table*}[!htb]
  \centering
  \caption{Correlation matrix for \InterFaceGAN, \MDSE\ and \MyMethod\ across different facial attributes. Standard deviations are shown in parentheses. \textbf{Avg} represents the average value of each column, excluding diagonal values.} 
  \label{tab:disentanglement1}
  \renewcommand{\arraystretch}{1.8}
  \setlength{\tabcolsep}{6pt} 
  \resizebox{\textwidth}{!}{
    \begin{tabular}{c|ccccc|ccccc|ccccc}
      \toprule
      \hline
      {Method} & \multicolumn{5}{c|}{\makecell{\InterFaceGAN\ \citep{InterFaceGAN2020}}} & \multicolumn{5}{c|}{\makecell{\MDSE\ \citep{MDSE2023}}} & \multicolumn{5}{c}{\textbf{\MyMethod\ (Our)}} \\    
      \cline{2-16}
      & Age & Gender & Hairstyle & Pose & Smile & Age & Gender & Hairstyle & Pose & Smile & Age & Gender & Hairstyle & Pose & Smile \\ 
      \midrule
      Age & {--} & 0.565 (0.01) & 0.232 (0.01) & 0.090 (0.00) & 0.154 (0.02) & {--} & 0.388 (0.03) & 0.265 (0.01) & \textbf{0.008 (0.00)} & 0.120 (0.01) & {--} & \textbf{0.311 (0.02)} & \textbf{0.211 (0.03)} & \underline{0.037 (0.00)} & \underline{0.017 (0.00)} \\
      Gender & 0.565 (0.01) & {--} & 0.268 (0.01) & \textbf{0.017 (0.00)} & 0.198 (0.02) & 0.388 (0.03) & {--} & 0.212 (0.00) & \underline{0.029 (0.00)} & 0.050 (0.00) & \textbf{0.311 (0.02)} & {--} & \textbf{0.191 (0.01)} & 0.032 (0.00) & \textbf{0.015 (0.00)} \\
      Hairstyle & 0.232 (0.01) & 0.268 (0.01) & {--} & 0.302 (0.02) & 0.254 (0.02) & 0.265 (0.01) & 0.212 (0.00) & {--} & \underline{0.156 (0.01)} & 0.033 (0.00) & \textbf{0.211 (0.03)} & \textbf{0.191 (0.01)} & {--} & \textbf{0.111 (0.01)} & \textbf{0.015 (0.00)} \\
      Pose & 0.090 (0.00) & \textbf{0.017 (0.00)} & 0.302 (0.02) & {--} & 0.098 (0.01) & \textbf{0.008 (0.00)} & \underline{0.029 (0.00)} & \underline{0.156 (0.01)} & {--} & \textbf{0.018 (0.00)} & \underline{0.037 (0.00)} & 0.032 (0.00) & \textbf{0.111 (0.01)} & {--} & \underline{0.025 (0.00)} \\
          Smile & 0.154 (0.02) & 0.198 (0.02) & 0.254 (0.02) & 0.098 (0.01) & {--} & 0.120 (0.01) & \underline{0.050 (0.00)} & \underline{0.033 (0.00)} & \textbf{0.018 (0.00)} & {--} & \underline{0.017 (0.00)} & \textbf{0.015 (0.00)} & \textbf{0.015 (0.00)} & \underline{0.025 (0.00)} & {--} \\
          \midrule
          \textbf{Avg.} & 0.260 (0.01) & 0.262 (0.01) & 0.264 (0.01) & 0.127 (0.01) & 0.176 (0.02) & \underline{0.195 (0.01)} & \underline{0.170 (0.01)} & \underline{0.167 (0.01)} & \underline{0.053 (0.00)} & \underline{0.055 (0.00)} & \textbf{0.144 (0.01)} & \textbf{0.137 (0.01)} & \textbf{0.132 (0.01)} & \textbf{0.051 (0.00)} & \textbf{0.018 (0.00)} \\
      \hline
      \bottomrule
    \end{tabular}
  }
\end{table*}

\begin{table*}[!htb]
  \centering
  \caption{Correlation matrix for \GANspace, \sefa\ and \MyMethod\ across different facial attributes. Standard deviations are shown in parentheses.~\textbf{Avg} represents the average value of each column, excluding diagonal values.}
 ~\label{tab:disentanglement2}
  \renewcommand{\arraystretch}{1.8}
  \setlength{\tabcolsep}{6pt} 
  \resizebox{\textwidth}{!}{
    \begin{tabular}{c|ccccc|ccccc|ccccc}
      \toprule
      \hline
      {Method} & \multicolumn{5}{c|}{\makecell{\GANspace\ \citep{GANSpace2020}}} & \multicolumn{5}{c|}{\makecell{\sefa\ \citep{SEFA2021}}} & \multicolumn{5}{c}{\textbf{\MyMethod (Our)}} \\    
      \cline{2-16}
      & Age & Gender & Hairstyle & Pose & Smile & Age & Gender & Hairstyle & Pose & Smile & Age & Gender & Hairstyle & Pose & Smile \\ 
      \midrule
      Age & {--} & 0.715 (0.01) & 0.309 (0.01) & 0.116 (0.01) & 0.125 (0.01) & {--} & 0.533 (0.01) & 0.366 (0.01) & 0.039 (0.01) & \textbf{0.001 (0.00)} & {--} & \textbf{0.311 (0.02)} & \textbf{0.211 (0.03)} & \underline{0.037 (0.00)} & \underline{0.017 (0.00)} \\
      Gender & 0.715 (0.01) & {--} & 0.546 (0.01) & 0.177 (0.01) & 0.037 (0.00) & 0.533 (0.01) & {--} & 0.281 (0.01) & 0.133 (0.01) & 0.170 (0.01) & \textbf{0.311 (0.02)} & {--} & \textbf{0.191 (0.01)} & 0.032 (0.00) & \textbf{0.015 (0.00)} \\
      Hairstyle & 0.309 (0.01) & 0.546 (0.01) & {--} & 0.234 (0.01) & 0.202 (0.01) & 0.366 (0.01) & 0.281 (0.01) & {--} & 0.211 (0.01) & 0.180 (0.01) & \textbf{0.211 (0.03)} & \textbf{0.191 (0.01)} & {--} & \textbf{0.111 (0.01)} & \textbf{0.015 (0.00)} \\
      Pose & 0.116 (0.01) & 0.177 (0.01) & 0.234 (0.01) & {--} & 0.153 (0.01) & 0.039 (0.01) & 0.133 (0.01) & 0.211 (0.01) & {--} & 0.150 (0.01) & \underline{0.037 (0.00)} & 0.032 (0.00) & \textbf{0.111 (0.01)} & {--} & \underline{0.025 (0.00)} \\
      Smile & 0.125 (0.01) & 0.037 (0.00) & 0.202 (0.01) & 0.153 (0.01) & {--} & \textbf{0.001 (0.00)} & 0.170 (0.01) & 0.180 (0.01) & 0.150 (0.01) & {--} & \underline{0.017 (0.00)} & \textbf{0.015 (0.00)} & \textbf{0.015 (0.00)} & \underline{0.025 (0.00)} & {--} \\
      \midrule
      \textbf{Avg.} & 0.316 (0.01) & 0.369 (0.01) & 0.323 (0.01) & 0.170 (0.01) & 0.129 (0.01) & 0.235 (0.01) & 0.279 (0.01) & 0.260 (0.01) & 0.133 (0.01) & 0.125 (0.01) & \textbf{0.144 (0.01)} & \textbf{0.137 (0.01)} & \textbf{0.132 (0.01)} & \textbf{0.051 (0.00)} & \textbf{0.018 (0.00)} \\
      \hline
      \bottomrule
    \end{tabular}
  }
\end{table*}

\subsection{Controllability Evaluation}
\label{sec:controllability}
This section evaluates the controllability of facial attribute editing using semantic vectors computed by \MyMethod\ for varying editing magnitudes, focusing on Age, Smile, Hairstyle, and Gender attributes. As shown in Fig.~\ref{fig:combined}, the semantic vectors for editing these attributes exhibit precise controllability. {For the Age attribute, increasing the editing magnitude from $-0.3$ to $0.3$ adjusts the age of the edited images from approximately $10$ to $50$ years, as estimated using the model in~\cite{AgeEstimationCVPR2019}.} The second and third rows of Fig.~\ref{fig:combined} show the controllability of Smile and Hairstyle attribute editing, with results similar to Age editing. {For the Smile attribute, increasing the negative $\beta$ gradually diminishes the smile until it disappears, while increasing $\beta$ in the positive direction enhances the smile's intensity.} Hair volume is the primary indicator for Hairstyle editing, with negative $\beta$ values reducing hair volume and positive values increasing it gradually.\par
\begin{figure*}[h]
  \centering
  \includegraphics[width=\linewidth]{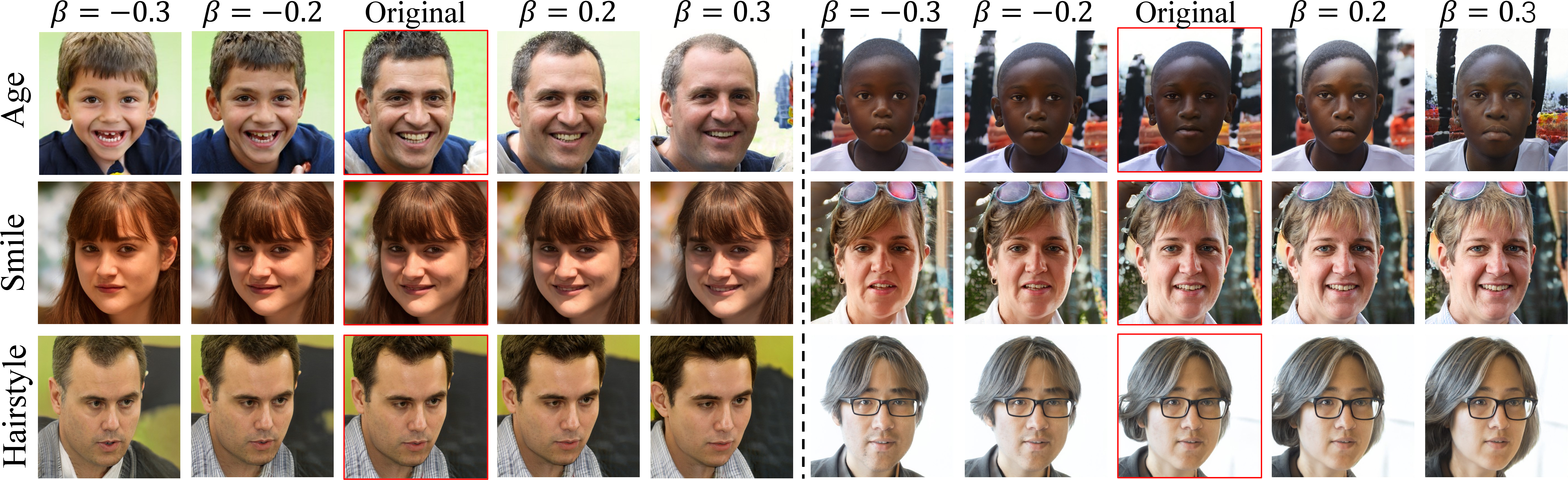}
  \vspace{-10pt}
  \caption{Qualitative evaluation of \MyMethod\ in controllable editing for Age, Smile, and Hairstyle attributes. When the editing magnitude $\beta$ is positive (e.g., $\beta = 0.3$) or negative (e.g., $\beta = -0.3$), the edited images exhibit significantly changes.}
~\label{fig:combined}
\end{figure*}
\begin{figure*}[h]
  \centering
  \includegraphics[width=0.6\linewidth]{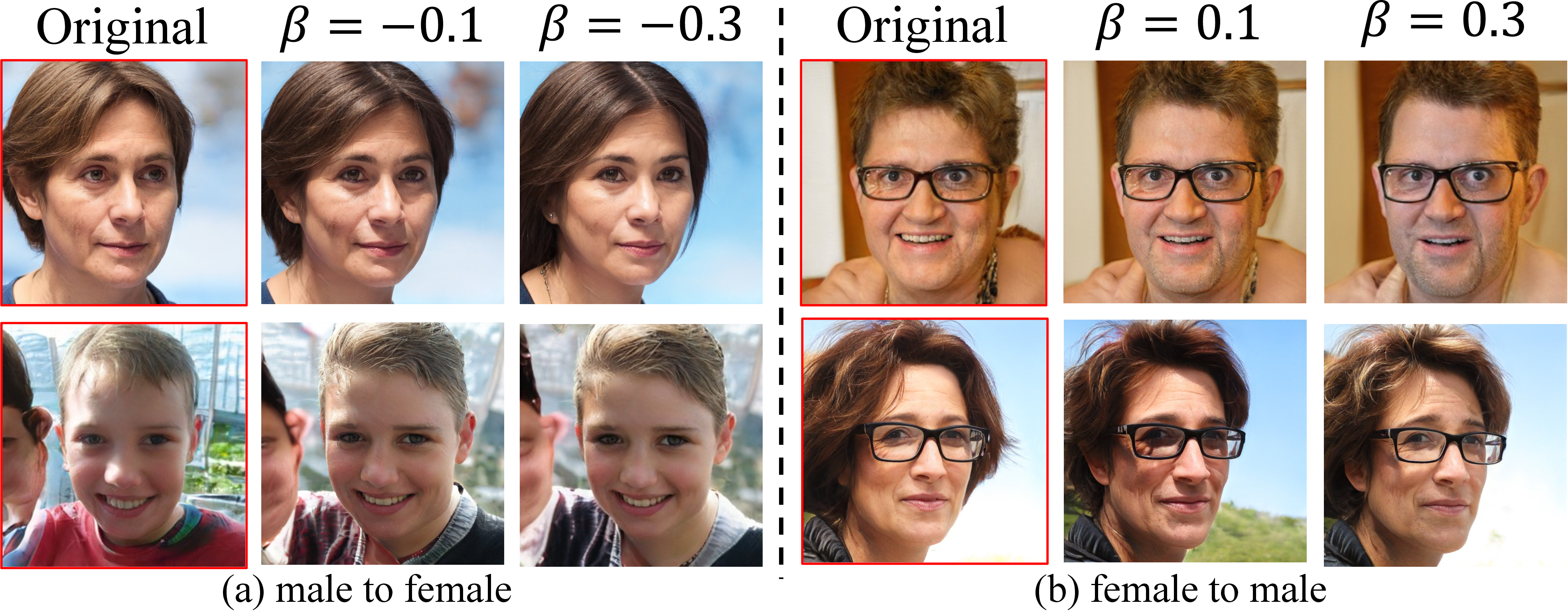}
  \caption{Qualitative comparison of \MyMethod\ in controllable editing for the Gender attribute. On the left is the transformation from male to female and on the right is the transformation from female to male. When the editing magnitude increases, the change in the Gender attribute becomes more significant.}
~\label{fig:gender}
\end{figure*}
For Gender attribute editing, transformations between males and females are shown in Fig.~\ref{fig:gender}. {When the editing magnitude $\beta$ is negative, the transformation shifts from male to female (Fig.~\ref{fig:gender}a), and increasing the magnitude results in more pronounced gender changes, while decreasing it reverses the effect.} The results also reveal slight entanglement with the Hairstyle attribute during Gender editing, due to the natural correlation between Gender and Hairstyle. However, no entanglement is observed with the Smile, Age, or Pose attributes. In summary, \MyMethod\ accurately adjusts facial attributes based on varying editing magnitudes.\par

\subsection{Quantitative Evaluation for Edited Image Quality}
{This section evaluates the quality of face images edited with semantic vectors from the \MyMethod\ framework, using FID, IS, LPIPS, and SSIM metrics. As shown in Table~\ref{tab:FID}, \MyMethod\ outperforms other methods, reducing FID by $10$ on ProGAN and at least $5$ on StyleGAN and StyleGAN2. Compared to baselines, it lowers FID by over $15$, with reductions of $25$ compared to \enjoyGAN\ and \MDSE\ and $20$ compared to \AdaTans. \MyMethod\ also excels in attributes like Age, Hairstyle, Pose, and Gender, generating more realistic images and consistently achieving lower LPIPS values, indicating improved quality.}

{In Table~\ref{tab:IS}, \MyMethod\ shows better or comparable IS scores across the three GAN models and performs well on SSIM, surpassing ProGAN and StyleGAN2, and matching \MDSE\ on StyleGAN. It improves SSIM by up to $0.1$ on StyleGAN2, $0.08$ on StyleGAN, and $0.05$ on ProGAN. Additionally, \MyMethod\ outperforms others in editing Age and Gender attributes, maintaining higher structural fidelity. Table~\ref{tab:diffusion} compares \MyMethod\ with diffusion-based methods, showing that it achieves top performance in all metrics while being faster in inference, making it more practical for real-time facial attribute editing. In summary, \MyMethod\ provides high-quality results, outperforming supervised methods despite being unsupervised, and is ideal for real-time and large-scale facial attribute editing.}
\begin{table*}[!htb]
  \centering
  \caption{FID and LPIPS comparisons of different methods on three pre-trained GAN Models. "{--}" indicates that the method does not provide implementation details for editing this facial attribute. Standard deviations are shown in parentheses.}
  \renewcommand{\arraystretch}{1.6}
~\label{tab:FID}
  \resizebox{\textwidth}{!}{
  \begin{tabular}{c|c|cc|cc|cc|cc|cc|cc|cc|cc}
    \toprule
  \hline  
  \multirow{3}{*}{{Model}} & \multirow{3}{*}{{Attribute}} 
  & \multicolumn{6}{c|}{Supervised Method} & \multicolumn{10}{c}{Unsupervised Method} \\ \cline{3-18}
  & \multicolumn{1}{c|}{} & \multicolumn{2}{c|}{\makecell{\InterFaceGAN\ \\ \citep{InterFaceGAN2020}}} & \multicolumn{2}{c|}{\makecell{\enjoyGAN\ \\ \citep{EnjoyGAN2021}}} & \multicolumn{2}{c|}{\makecell{\MDSE\ \\ \citep{MDSE2023}}} & \multicolumn{2}{c|}{\makecell{\GANspace\ \\ \citep{GANSpace2020}}} & \multicolumn{2}{c|}{\makecell{\sefa\ \\ \citep{SEFA2021}}}  & \multicolumn{2}{c|}{\makecell{\AdaTans\ \\ \citep{AdaTrans2023}}} & \multicolumn{2}{c|}{\makecell{\SDFlow\ \\ \citep{SDFlow2024ICASSP}}}  & \multicolumn{2}{c}{{\textbf{\MyMethod\ (Our)}}}  \\    
  
  \cline{3-18}
  & \multirow{6}*{} & FID $\downarrow$ & LPIPS $\downarrow$ & FID $\downarrow$ & LPIPS $\downarrow$ & FID $\downarrow$ & LPIPS $\downarrow$ & FID $\downarrow$ & LPIPS $\downarrow$ & FID $\downarrow$ & LPIPS $\downarrow$ & FID $\downarrow$ & LPIPS $\downarrow$ & FID $\downarrow$ & LPIPS $\downarrow$ & FID $\downarrow$ & LPIPS $\downarrow$ \\ 
  \hline
  \multirow{6}*{\textit{StyleGAN}}
      & Age 
        & 68.93 (0.03) & 0.3333 (0.02) & 74.32 (0.02) & 0.3856 (0.03) & 87.54 (0.01) & 0.4166 (0.02) & 64.86 (0.02) & 0.3119 (0.01) & 60.91 (0.03) & 0.2933 (0.02) & 62.34 (0.02) & 0.2644 (0.03) & \underline{59.39 (0.02)} & \underline{0.2425 (0.01)} & \textbf{45.72 (0.01)} & \textbf{0.2387 (0.02)}\\
        & Gender 
        & 82.38 (0.03) & 0.3814 (0.02) & 76.32 (0.02) & 0.3611 (0.03) & 76.99 (0.01) & 0.3825 (0.02) & 71.99 (0.02) & 0.3741 (0.01) & 70.79 (0.03) & 0.3428 (0.02) & 78.59 (0.02) & 0.3323 (0.03) & \underline{68.08 (0.02)} & \underline{0.2641 (0.01)} & \textbf{55.74 (0.01)} & \textbf{0.2156 (0.02)}\\
        & Hairstyle 
        & 68.63 (0.01) & 0.3154 (0.02) & 75.66 (0.02) & 0.3813 (0.03) & {--} & {--} & \underline{38.05 (0.02)} & \underline{0.1937 (0.01)} & \textbf{25.13 (0.01)} & \textbf{0.1448 (0.02)} & 72.55 (0.03) & 0.3537 (0.02) & 66.5 (0.02) & 0.2536 (0.01) & 43.21 (0.02) & 0.2322 (0.02)\\
        & Pose 
        & 75.65 (0.00) & 0.4112 (0.02) & {--} & {--} & 73.11 (0.01) & 0.3548 (0.03) & \underline{53.95 (0.02)} & \underline{0.2526 (0.01)} & 65.63 (0.03) & 0.3721 (0.02) & {--} & {--} & {--} & {--} & \textbf{41.57 (0.01)} & \textbf{0.2213 (0.02)}\\
        & Smile 
        & \underline{30.65 (0.02)} & \underline{0.1718 (0.01)} & 63.39 (0.03) & 0.3324 (0.02) & 42.36 (0.01) & 0.2133 (0.03) & \textbf{25.87 (0.01)} & \textbf{0.2133 (0.02)} & 59.13 (0.02) & 0.2617 (0.01) & 68.21 (0.03) & 0.3269 (0.02) & 57.42 (0.02) & 0.2235 (0.01) & 39.41 (0.01) & \textbf{0.1231 (0.02)} \\
        & \textbf{Avg.} 
        & 65.25 (0.02) & 0.3226 (0.02) & 72.42 (0.02) & 0.3651 (0.03) & 70.00 (0.01) & 0.3418 (0.02) & \underline{50.94 (0.02)} & \underline{0.2691 (0.01)} & 56.32 (0.03) & 0.2829 (0.02) & 70.42 (0.02) & 0.3193 (0.03) & 62.85 (0.02) & \underline{0.2459 (0.01)} & \textbf{45.13 (0.01)} & \textbf{0.2062 (0.02)}\\
        \midrule
  \multirow{6}*{\textit{StyleGAN2}}
      & Age 
        & 69.39 (0.02) & 0.3724 (0.03) & 65.24 (0.02) & 0.3045 (0.02) & 72.19 (0.03) & 0.3649 (0.02) & 63.96 (0.02) & 0.3356 (0.03) & 56.35 (0.03) & 0.3147 (0.02) & 74.03 (0.02) & 0.2646 (0.03) & \underline{52.41 (0.02)} & \textbf{0.1844 (0.03)} & \textbf{48.97 (0.02)} & \underline{0.2423 (0.03)}\\
        & Gender 
        & 62.77 (0.03) & 0.3329 (0.02) & 62.33 (0.02) & 0.2836 (0.03) & 68.37 (0.03) & 0.3644 (0.02) & 67.37 (0.02) & 0.3438 (0.03) & \underline{51.24 (0.03)} & 0.2722 (0.02) & 79.92 (0.02) & 0.2617 (0.03) & 59.00 (0.02) & \textbf{0.2234 (0.03)} & \textbf{47.75 (0.03)} & \underline{0.2534 (0.02)}\\
        & Hairstyle 
        & 61.16 (0.03) & 0.3418 (0.02) & 61.28 (0.02) & 0.3219 (0.03) & {--} & {--} & 61.22 (0.03) & 0.3428 (0.02) & 49.95 (0.02) & 0.2846 (0.03) & 81.85 (0.03) & 0.2518 (0.02) & \textbf{45.57 (0.02)} & \underline{0.2836 (0.03)} & \underline{49.07 (0.03)} & \textbf{0.2411 (0.02)}\\
        & Pose 
        & 74.95 (0.02) & 0.3764 (0.03) & {--} & {--} & 63.92 (0.03) & 0.4477 (0.02) & 66.21 (0.02) & 0.3519 (0.03) & \underline{60.50 (0.03)} & \underline{0.3112 (0.02)} & {--} & {--} & {--} & {--} & \textbf{46.82 (0.02)} & \textbf{0.2621 (0.03)} \\
        & Smile 
        & 69.95 (0.03) & 0.3822 (0.02) & 62.79 (0.02) & 0.2936 (0.03) & 66.03 (0.03) & 0.2628 (0.02) & 73.21 (0.02) & 0.3825 (0.03) & 59.29 (0.03) & 0.3123 (0.02) & \underline{44.18 (0.02)} & \underline{0.2534 (0.03)} & 50.21 (0.03) & 0.2622 (0.02) & \textbf{43.54 (0.02)} & \textbf{0.1814 (0.03)}\\
        & \textbf{Avg.} 
        & 67.64 (0.03) & 0.3611 (0.02) & 64.17 (0.02) & 0.3009 (0.03) & 62.17 (0.03) & 0.3599 (0.02) & 65.39 (0.02) & 0.3513 (0.03) & \underline{55.47 (0.03)} & 0.2990 (0.02) & 75.46 (0.02) & 0.2579 (0.03) & 51.80 (0.02) & \underline{0.2384 (0.03)} & \textbf{47.23 (0.03)} & \textbf{0.2361 (0.02)}\\
        \midrule

  \multirow{6}*{\textit{ProGAN}}
      & Age 
      & 79.10 (0.01) & 0.3224 (0.00) & 78.32 (0.00) & 0.3212 (0.01) & 77.15 (0.00) & 0.3487 (0.02) & \underline{61.46 (0.00)} & 0.3021 (0.03) & 72.91 (0.02) & 0.3031 (0.00) & 70.65 (0.05) & 0.2938 (0.02) & 62.95 (0.03) &\underline{0.2646 (0.00)} & \textbf{60.27 (0.01)} & \textbf{0.2617 (0.00)}\\
      & Gender 
      & 89.52 (0.02) & 0.3645 (0.01) & 77.21 (0.01) & 0.3254 (0.00) & 77.52 (0.02) & 0.4132 (0.03) & \underline{58.15 (0.05)} & \underline{0.2841 (0.00)} & 93.97 (0.00)& 0.3546 (0.02) & 76.47 (0.03) & 0.3026 (0.00)  & 74.35 (0.00) & 0.3149 (0.01) & \textbf{56.81 (0.01)} & \textbf{0.2622 (0.00)}\\
      & Hairstyle 
      & \underline{50.31 (0.01)} & 0.2511 (0.00) & 69.64 (0.00) & 0.3124 (0.00) & {--} & {--} & 72.67 (0.01)  & 0.3128 (0.02) & 56.23 (0.01) & 0.2324 (0.00) & 76.77 (0.01) & 0.3011 (0.02) & 60.76 (0.01) & \underline{0.2426 (0.03)} & \textbf{48.17 (0.01)} & \textbf{0.2311 (0.00)} \\
      & Pose 
      & 65.06 (0.00) & 0.3933 (0.03) & {--} & {--} & 66.93 (0.05) & 0.5444 (0.00) & \underline{61.96 (0.03)} & \underline{0.2697 (0.01)}& 67.33 (0.02) & 0.3824 (0.00) & {--}  & {--} & {--}  & {--} & \textbf{52.22 (0.03)} & \textbf{0.2431 (0.00)}\\
      & Smile 
      & \underline{45.82 (0.00)} & \underline{0.2118 (0.02)} & 63.21 (0.03) & 0.2936 (0.02) & 74.31 (0.05) & 0.4097 (0.02) & 60.88 (0.01) & 0.2756 (0.00) & 47.22 (0.00) & \textbf{0.1945 (0.03)}  & 75.86 (0.01) & 0.3226 (0.00) & 60.66 (0.02) & 0.2329 (0.01) & \textbf{44.43 (0.02)} & 0.2236 (0.01)\\
      & \textbf{Avg.} 
      & 65.96 (0.01) & 0.3086 (0.01) & 72.10 (0.01) & 0.3132 (0.00) & {73.98 (0.03)} & 0.4290 (0.02) & \underline{62.76 (0.02)} & 0.2889 (0.01) & 67.53 (0.01) & 0.2934 (0.00) & 74.94 (0.02) & 0.3050 (0.01) & 64.68 (0.02) & \underline{0.2638 (0.00)} & \textbf{52.65 (0.01)} & \textbf{0.2443 (0.00)}\\
      \hline
      \bottomrule
  \end{tabular}
  }
\end{table*}
\begin{table*}[!htb]
  \centering
  \caption{IS and SSIM comparisons of different methods on three pre-trained GAN Models. "{--}" indicates that the method does not provide implementation details for editing this facial attribute. Standard deviations are shown in parentheses.}
  \renewcommand{\arraystretch}{1.6}
~\label{tab:IS}
  \resizebox{\textwidth}{!}{
  \begin{tabular}{c|c|cc|cc|cc|cc|cc|cc|cc|cc} 
    \toprule
  \hline  
  \multirow{3}{*}{{Model}} & \multirow{3}{*}{{Attribute}} 
  & \multicolumn{6}{c|}{Supervised Method} & \multicolumn{10}{c}{Unsupervised Method} \\ \cline{3-18}
  & \multicolumn{1}{c|}{} & \multicolumn{2}{c|}{\makecell{\InterFaceGAN\ \\ \citep{InterFaceGAN2020}}} & \multicolumn{2}{c|}{\makecell{\enjoyGAN\ \\\citep{EnjoyGAN2021}}} & \multicolumn{2}{c|}{\makecell{\MDSE\ \\\citep{MDSE2023}}} & \multicolumn{2}{c|}{\makecell{\GANspace\ \\\citep{GANSpace2020}}} & \multicolumn{2}{c|}{\makecell{\sefa\ \\\citep{SEFA2021}}}  & \multicolumn{2}{c|}{\makecell{\AdaTans\ \\\citep{AdaTrans2023}}} & \multicolumn{2}{c|}{\makecell{\SDFlow\ \\\citep{SDFlow2024ICASSP}}}  & \multicolumn{2}{c}{{\textbf{\MyMethod\ (Our)}}}  \\    
  
  \cline{3-18}
  &\multirow{6}*{} & IS $\uparrow$ & SSIM $\uparrow$ & IS $\uparrow$& SSIM $\uparrow$& IS $\uparrow$& SSIM $\uparrow$& IS $\uparrow$& SSIM $\uparrow$& IS $\uparrow$& SSIM $\uparrow$& IS $\uparrow$& SSIM $\uparrow$& IS $\uparrow$& SSIM $\uparrow$& IS $\uparrow$& SSIM $\uparrow$\\
  \hline
  \multirow{6}{*}{\textit{StyleGAN}}
      & Age 
          & 2.942 (0.01) & 0.6834 (0.01) & 2.975 (0.01) & 0.7148 (0.02) & 3.014 (0.02) & 0.7212 (0.01) & \underline{3.081 (0.01)} & 0.7036 (0.01) & 2.894 (0.02) & 0.7348 (0.01) & 2.421 (0.01) & \underline{0.7434 (0.01)} & 2.458 (0.02) & 0.7221 (0.01) & \textbf{3.391 (0.01)} & \textbf{0.7726 (0.02)}\\
          & Gender 
          & 3.114 (0.01) & 0.6331 (0.01) & 3.113 (0.02) & 0.7240 (0.01) & 3.242 (0.01) & \textbf{0.7644 (0.02)} & 2.854 (0.01) & 0.6549 (0.01) & \underline{3.275 (0.01)} & 0.6789 (0.02) & 2.410 (0.01) & 0.6824 (0.01) & 2.534 (0.02) & 0.6930 (0.01) & \textbf{3.301 (0.01)} & \underline{0.7344 (0.02)}\\
          & Hairstyle 
          & \underline{3.062 (0.01)} & 0.7146 (0.01) & 2.792 (0.02) & 0.7123 (0.01) & {--} & {--} & 3.033 (0.01) & \underline{0.8235 (0.01)} & \textbf{3.101 (0.01)} & 0.7847 (0.02) & 2.553 (0.01) & 0.6627 (0.01) & 2.454 (0.02) & 0.7069 (0.01) & 2.362 (0.01) & \textbf{0.8324 (0.02)}\\
          & Pose 
          & 3.145 (0.01) & 0.6131 (0.01) & {--} & {--} & 2.943 (0.02) & \underline{0.7879 (0.01)} & 3.091 (0.01) & 0.7569 (0.01) & \underline{3.230 (0.01)} & 0.6445 (0.02) & {--}  & {--} & {--} & {--} & \textbf{3.263 (0.01)} & \textbf{0.7986 (0.01)}\\
          & Smile 
          & 3.143 (0.01) & 0.8429 (0.01) & 2.810 (0.02) & 0.7422 (0.01) & 3.064 (0.01) & \underline{0.8848 (0.02)} & 2.991 (0.01) & 0.8747 (0.01) & \underline{3.153 (0.01)} & 0.7545 (0.01) & 2.664 (0.01) & 0.6832 (0.01) & 2.571 (0.02) & 0.7337 (0.01) & \textbf{3.412 (0.01)} & \textbf{0.8913 (0.02)}\\
          & \textbf{Avg.} 
          & 3.081 (0.01) & 0.6974 (0.01) & 2.923 (0.02) & 0.7233 (0.01) & 3.066 (0.01) & \underline{0.7896 (0.02)} & {3.010 (0.01)} & 0.7627 (0.01) & \underline{3.131 (0.01)} & 0.7195 (0.01) & 2.512 (0.01) & 0.6929 (0.01) & 2.504 (0.02) & 0.7139 (0.01) & \textbf{3.146 (0.01)} & \textbf{0.8059 (0.02)}\\ 
          \midrule
  \multirow{6}{*}{\textit{StyleGAN2}}
       & Age 
        & \underline{2.794 (0.01)} & 0.7625 (0.02) & 2.620 (0.00) & 0.7927 (0.01) & 2.411 (0.02) & 0.7688 (0.00) & 2.683 (0.01) & 0.7825 (0.02) & 2.673 (0.01) & 0.8129 (0.00) & 2.734 (0.02) & 0.7936 (0.01) & 1.843 (0.00) & \underline{0.8122 (0.01)} & \textbf{2.934 (0.01)} & \textbf{0.8428 (0.02)}\\
        & Gender 
        & 2.501 (0.02) & 0.7897 (0.01) & \underline{2.951 (0.01)} & 0.8128 (0.02) & 2.453 (0.00) & 0.7432 (0.01) & 2.504 (0.02) & 0.7834 (0.00) & 2.501 (0.01) & \underline{0.8377 (0.02)} & 2.740 (0.02) & 0.7943 (0.01) & 1.685 (0.00) & 0.7733 (0.02) & \textbf{3.010 (0.01)} & \textbf{0.8537 (0.02)}\\
        & Hairstyle 
        & 2.553 (0.01) & 0.7824 (0.02) & 2.534 (0.00) & 0.7856 (0.01) & {--} & {--} & \underline{2.874 (0.01)} & 0.7866 (0.02) & 2.564 (0.02) & \underline{0.8326 (0.01)} & {2.692 (0.01)} & 0.7844 (0.00) & 1.863 (0.02) & 0.7824 (0.01) & \textbf{2.883 (0.01)} & \textbf{0.8411 (0.02)}\\
        & Pose 
        & {2.814 (0.02)} & 0.7631 (0.01) & {--} & {--} & 2.683 (0.01) & 0.6839 (0.02) & 2.631 (0.00) & 0.7749 (0.01) & \underline{2.859 (0.01)} & \underline{0.8128 (0.02)} & {--}  & {--} & {--} & {--} & \textbf{2.991 (0.01)} & \textbf{0.8425 (0.02)}\\
        & Smile 
        & 2.649 (0.01) & 0.7628 (0.02) & 2.613 (0.00) & 0.7844 (0.01) & 2.543 (0.02) & \underline{0.8446 (0.01)} & \underline{2.956 (0.01)} & 0.7536 (0.02) & 2.674 (0.01) & 0.8023 (0.00) & 2.846 (0.02) & 0.8024 (0.01) & 1.853 (0.00) & 0.7177 (0.02) & \textbf{3.111 (0.01)} & \textbf{0.8544 (0.02)}\\
        & \textbf{Avg.} 
        & 2.662 (0.01) & 0.7721 (0.02) & 2.680 (0.00) & 0.7939 (0.01) & 2.523 (0.02) & 0.7601 (0.00) & {2.730 (0.01)} & 0.7762 (0.02) & 2.654 (0.01) & \underline{0.8197 (0.02)} & \underline{2.753 (0.01)} & 0.7937 (0.00) & 1.811 (0.02) & 0.7714 (0.01) & \textbf{2.986 (0.01)} & \textbf{0.8469 (0.02)}\\
        \midrule

   \multirow{6}{*}{\textit{ProGAN}}
    & Age 
    & 2.529 (0.01) & 0.6944 (0.01) & 2.012 (0.01) & 0.6848 (0.01) & \underline{2.564 (0.01)} & \underline{0.7547 (0.01)} & 2.101 (0.01) & 0.7044 (0.01) & {2.563 (0.01)} & 0.7135 (0.01) & 2.087 (0.01) & 0.7233 (0.01) & 2.061 (0.01) & 0.7425 (0.01) & \textbf{2.613 (0.01)} & \textbf{0.7645 (0.01)}\\
    & Gender 
    & \textbf{2.711 (0.01)} & 0.6548 (0.01) & 2.483 (0.01) & \textbf{0.7489 (0.01)} & 2.310 (0.01) & 0.7149 (0.01) & 2.213 (0.01) & 0.7134 (0.01) & {2.712 (0.01)} & 0.6587 (0.01) & 2.323 (0.01) & 0.7133 (0.01) & 2.154 (0.01) & 0.6924 (0.01) & \underline{2.633 (0.01)} & \underline{0.7426 (0.01)}\\
    & Hairstyle 
    & 2.320 (0.01) & 0.7677 (0.01) & 2.331 (0.01) & 0.7624 (0.01) & {--} & {--} & \underline{2.725 (0.01)} & 0.6946 (0.01) & 2.303 (0.01) & \underline{0.7811 (0.01)} & 2.270 (0.01) & 0.7133 (0.01) & 2.033 (0.01) & 0.7740 (0.01) & \textbf{2.832 (0.01)} & \textbf{0.7824 (0.01)}\\
    & Pose 
    & 2.401 (0.01) & 0.6137 (0.01) & {--} & {--} & \underline{2.510 (0.01)} & \textbf{0.7722 (0.01)} & 2.493 (0.01) & 0.7346 (0.01) & 2.479 (0.01) & 0.6329 (0.01) & {--} & {--} & {--} & {--} & \textbf{2.513 (0.01)} & \underline{0.7629 (0.01)}\\
    & Smile 
    & 2.577 (0.01) & 0.8028 (0.01) & \underline{2.663 (0.01)} & \underline{0.8129 (0.01)} & 2.513 (0.01) & 0.7749 (0.01) & {2.623 (0.01)} & 0.7241 (0.01) & 2.614 (0.01) & \textbf{0.8212 (0.01)} & 2.184 (0.01) & 0.6946 (0.01) & 1.963 (0.01) & 0.7846 (0.01) & \textbf{2.683 (0.01)} & 0.7727 (0.01)\\
    & \textbf{Avg.} 
    & 2.508 (0.01) & 0.7067 (0.01) & 2.372 (0.01) & 0.7523 (0.01) & {2.474 (0.01)} & \underline{0.7542 (0.01)} & 2.431 (0.01) & 0.7142 (0.01) & \underline{2.534 (0.01)} & 0.7215 (0.01) & 2.216 (0.01) & 0.7111 (0.01) & 2.053 (0.01) & 0.7484 (0.01) & \textbf{2.655 (0.01)} & \textbf{0.7650 (0.01)}\\
    \hline
    \bottomrule
    \end{tabular}
  }
\end{table*}

\begin{table*}[!htb]
\centering
\caption{{A comparison between our proposed \MyMethod\ and state-of-the-art diffusion-based facial attribute editing methods in terms of SSIM, FID, LPIPS, and inference time, where \MyMethod\ is tested on the StyleGAN backbone. Standard deviations are shown in parentheses.}}
\setlength{\tabcolsep}{4pt}
\renewcommand{\arraystretch}{1.4}
\resizebox{\textwidth}{!}{
\begin{tabular}{cc|ccc|ccc|ccc|ccc}
    \toprule
    \hline
    \multirow{3}{*}{{{}}} & \multirow{3}{*}{{{Attribute}}} 
    & \multicolumn{9}{c|}{{Diffusion-based Method}} & \multicolumn{3}{c}{{GAN-based Method (StyleGAN Backbone)}} \\ \cline{3-14}
    & \multicolumn{1}{c|}{} & \multicolumn{3}{c|}{{\makecell{DDS\ \citep{dds2023}}}} & \multicolumn{3}{c|}{{\makecell{CDS\ \citep{cds2024}}}} & \multicolumn{3}{c|}{{\makecell{FPE\ \citep{FPE2024}}}} & \multicolumn{3}{c}{{\makecell{\textbf{\MyMethod\ (Our)}}}}  \\   
    \cline{3-14}
    &\multirow{4}*{{}} & {SSIM $\uparrow$} & {FID $\downarrow$} & {LPIPS $\downarrow$}& {SSIM $\uparrow$} & {FID $\downarrow$} & {LPIPS $\downarrow$} & {SSIM $\uparrow$} & {FID $\downarrow$} & {LPIPS $\downarrow$} & {SSIM $\uparrow$} & {FID $\downarrow$} & {LPIPS $\downarrow$}\\ \midrule
    & {Age} & {0.7931 (0.03)} & {86.34 (0.05)} & {\textbf{0.1721 (0.01)}} & {0.7121 (0.00)} & {70.28 (0.01)} & {0.2802 (0.02)} &  {0.6823 (0.02)} & {\underline{64.36 (0.00)}} & {0.2726 (0.02)} & {\textbf{0.7726 (0.02)}} & {\textbf{45.72 (0.01)}} & {\underline{0.2387 (0.02)}} \\
    & {Gender} & {0.4231 (0.01)} & {83.57 (0.03)} & {0.4122 (0.00)} & {\underline{0.6946 (0.02)}} & {\underline{69.17 (0.03)}} & {\underline{0.3024 (0.03)}} & {0.3944 (0.05)} & {72.43 (0.00)} & {0.4936 (0.03)} & {\textbf{0.7344 (0.02)}} & {\textbf{55.74 (0.01)}} & {\textbf{0.2156 (0.02)}} \\ 
    & {Smile} & {0.5234 (0.01)} & {83.36 (0.00)} & {0.2853 (0.02)} & {0.7647 (0.00)} & {80.48 (0.01)} & {0.2159 (0.01)} & {0.5936 (0.00)} & {\underline{73.98 (0.02)}} & {0.2710 (0.00)} & {\underline{0.8913 (0.02)}} & {\textbf{39.41 (0.01)}} & {\textbf{0.1231 (0.02)}} \\ 
    & {\textbf{Avg.}} & {0.5799 (0.01)} & {84.42 (0.02)} & {0.2899 (0.01)} & {\underline{0.7238 (0.00)}} & {73.31 (0.01)} & {0.2662 (0.02)} & {{0.5568 (0.01)}} & {\underline{70.26 (0.00)}} & {0.3457 (0.01)} & {\textbf{0.7994 (0.02)}} & {\textbf{46.96 (0.01)}}  & {\textbf{0.1925 (0.02)}} \\
    \bottomrule
    & \multicolumn{13}{c}{{Inference Time $\downarrow$}} \\ \midrule
    & {\textbf{Avg.}} & \multicolumn{3}{c|}{{17.43s}} & \multicolumn{3}{c|}{{16.31s}} & \multicolumn{3}{c|}{{7.42s}} & \multicolumn{3}{c}{{\textbf{1.42s}}} \\ \hline
    \bottomrule
\end{tabular}
}
\label{tab:diffusion}
\end{table*}

\section{Discussions}\label{sec51}
{In this section, we discuss (i) a comprehensive discussion on the effectiveness of each component in~\MyMethod\ (ablation study), (ii) the sensitivity of the regularization parameter in the proposed \MyAlgorithm, (iii) user study and practical applications, as well as the limitations of \MyMethod.}

\subsection{Ablation Study and Discussion on Sensitivity of Regularization Parameters for \MyAlgorithm}
{This section specifies two complementary protocols: an ablation study to disentangle the effect of each component, and a sensitivity analysis to assess the robustness of the full model to $(\alpha,\lambda)$. All settings follow the unified evaluation in Section~\ref{sec:exp_details}. Together, these two analyses reveal both the functional necessity of each term and the stability of the optimization under varying regularization strengths.} 
\subsubsection{Experimental Design}
{For the ablation study, we remove one component at a time under the unified evaluation protocol and compare against the full \MyMethod, reporting Avg corr (inter-attribute correlation from Eq.~\eqref{corrcompute}; lower is better) together with FID, LPIPS, and SSIM (Table~\ref{tab:ablation}). To examine the specific contribution of each term, we construct three ablation settings as follows:} 
(i) \emph{w/o boundary} sets $\lambda{=}0$ in Eq.~\eqref{objFun1}/\eqref{objFun3}; 
(ii) \emph{w/o non-negativity} removes the $\B{F}$ projection (set $\alpha=0$); 
(iii) \emph{w/o orthogonality} drops $\B{W}^\top\B{W}{=}\B{I}$ and updates $\B{W}\!\leftarrow\!\B{A}{=}\B{Z}\B{P}^\top{+}\alpha\B{F}{-}\lambda\B{S}$ (with column $\ell_2$ normalization for stability). \par

For the sensitivity analysis of the regularization parameters $\alpha$ and $\lambda$ in \MyAlgorithm, we vary one parameter while keeping the other fixed within a small grid and measure the average changes in four metrics: FID, LPIPS, IS, and SSIM. 
{Concretely, for the $\lambda$-sweep, we set $\alpha\in\{0.1,\,0.2,\,0.5,\,1,\,2,\,3,\,5\}$ and vary $\lambda\in\{1,\,2,\,4,\,5\}$; for the $\alpha$-sweep we fix $\lambda$ on the same grid and vary $\alpha$ on the above set.}
For each $(\alpha,\lambda)$ pair (e.g., $(\alpha{=}0.1,\lambda{=}1)$), semantic vectors are computed per attribute and {$100$} edited images are generated; we then average FID, LPIPS, IS, and SSIM over attributes. 
{These sensitivity results characterize how stable the model remains once all regularization components are included, and thus complement the ablation findings in Table~\ref{tab:ablation} (see Figs.~\ref{fig:sens1}-\ref{fig:sens2}).}

\subsubsection{{Ablation Study}}
{
The ablation results are summarized in Table~\ref{tab:ablation}. 
From these results, it is evident that each regularization component plays a distinct and complementary role in learning the semantic vectors. 
Specifically,}
\begin{itemize}
  \item {\textbf{w/o boundary:} Removing the boundary constraint weakens the separation between semantic attributes, leading to higher inter-attribute correlation (larger Avg corr) and degraded FID and LPIPS. This demonstrates that the boundary term is crucial for maintaining attribute disentanglement and perceptual fidelity.}
  \item {\textbf{w/o non-negativity:} Without the non-negativity constraint, semantic vectors deviate from realistic feature directions. Although overall visual quality remains reasonable, higher Avg corr and LPIPS indicate stronger attribute coupling and reduced perceptual consistency.}
  \item {\textbf{w/o orthogonality:} Dropping the orthogonality constraint slightly increases interference among attributes, as reflected by lower SSIM and moderately higher correlation. This confirms that orthogonality helps preserve structural consistency across generated images.}
\end{itemize}\par
{
Overall, these findings verify that each regularization term contributes uniquely to stabilizing training and enhancing disentanglement. 
Removing any component increases inter-attribute correlation and degrades image quality, while the subsequent sensitivity analysis further demonstrates that \MyAlgorithm\ remains robust under moderate variations of $(\alpha,\lambda)$. 
Therefore, \MyMethod\ can be reliably applied to facial-attribute editing with minimal parameter tuning, offering both flexibility and robustness in practical applications.}
\begin{table}[h]
\centering
\caption{{Ablation study on U\text{-}Face components (averaged over selected attributes). Lower corr/FID/LPIPS is better, higher SSIM is better.}}
\label{tab:ablation}
\small
\begin{tabular}{lcccc}
\toprule
\hline
\textbf{{Variant}} & \textbf{{Avg corr} $\downarrow$} & \textbf{{FID} $\downarrow$} & \textbf{{LPIPS} $\downarrow$} & \textbf{{SSIM} $\uparrow$} \\
\midrule
{w/o boundary ($\lambda{=}0$)} & {0.274} & {52.34} & {0.2234} & {0.7063} \\
{w/o non-negativity ($\alpha{=}0$)} & {0.155} & {50.34} & {0.2466} & {0.7938} \\
{w/o orthogonality} & {0.221} & {51.22} & {0.2214} & {0.7541} \\
\textbf{{U\text{-}Face (full)}} & \textbf{{0.096}} & \textbf{{45.13}} & \textbf{{0.2062}} & \textbf{{0.8059}} \\
\hline
\bottomrule
\end{tabular}
\end{table}
\subsubsection{Sensitivity Analysis}
The experimental results in Fig.~\ref{fig:sens1} indicate that the proposed \MyAlgorithm\ is not sensitive to the parameters $\alpha$ and $\lambda$ in the FID, IS, SSIM, and LPIPS metrics. Specifically, when $\lambda=1$, the IS, FID, LPIPS, and SSIM values vary within a small range as $\alpha$ increases. Similarly, when $\lambda$ is set to $2$, $4$, or $5$, the variations in these metrics remain relatively small. This suggests that, for a fixed $\lambda$, different values of $\alpha$ have a limited impact on the quality of the original facial images.\par
As shown in Fig.~\ref{fig:sens2}, when the regularization parameter $\alpha$ is fixed, the regularization parameter $\lambda$ exhibits similar sensitivity. Specifically, the IS metric remains approximately $0.1$, the maximum change in the FID metric remains below $2.5$, and the impact of $\lambda$ on the LPIPS and SSIM metrics is even smaller, with variations not exceeding $0.04$ and $0.05$, respectively.\par
\begin{figure*}[h]
  \centering
  \includegraphics[width=\linewidth]{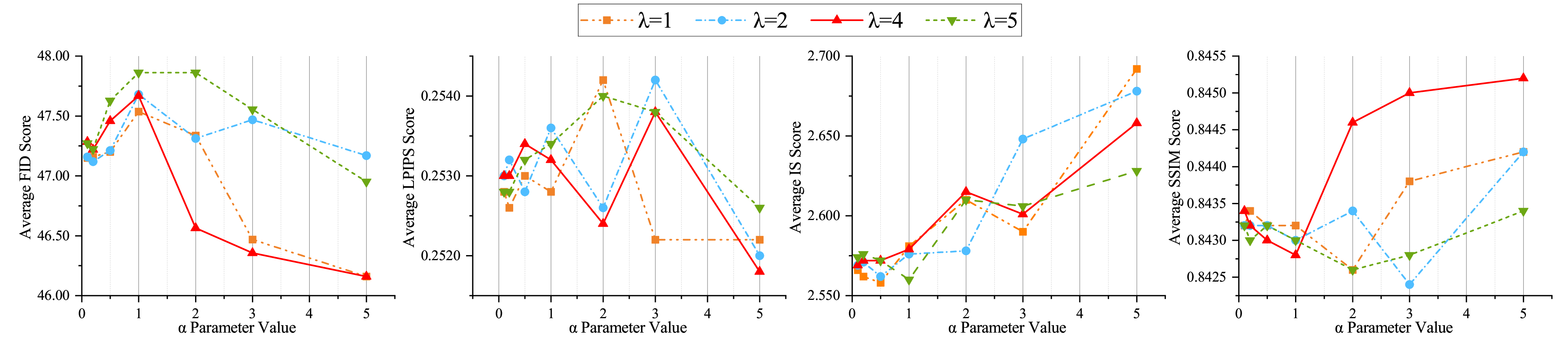}

  \caption{ Comparison of the sensitivity for the regularization parameter $\lambda$ in \MyAlgorithm\ using the average values of FID, LPIPS, IS, and SSIM~. For $\alpha$ values of $\{0.1, 0.2, 0.5, 1, 2, 3, 5\}$, $\lambda$ is varied over the range $\{1, 2, 4, 5\}$.}
~\label{fig:sens1}
\end{figure*}
\begin{figure*}[h]
\centering
\includegraphics[width=\linewidth]{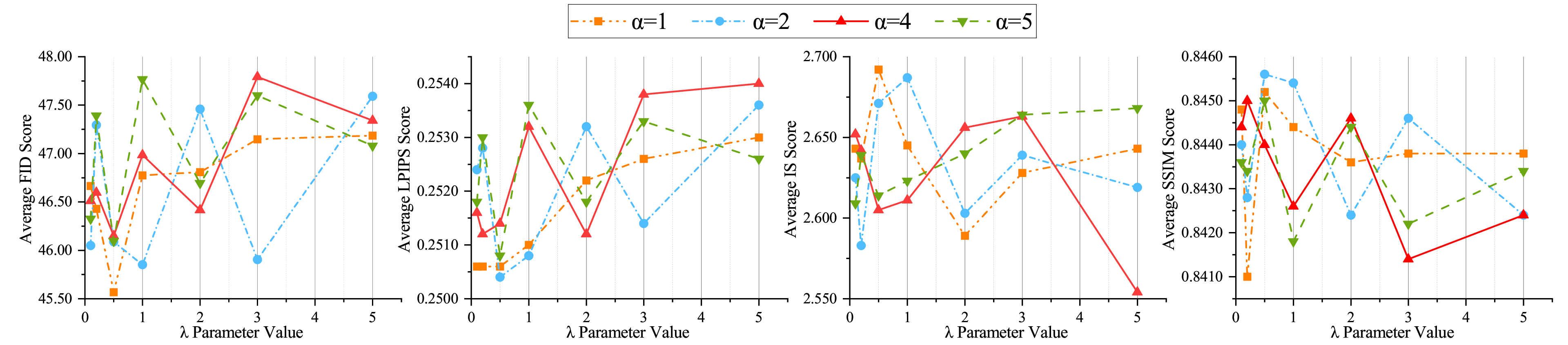}
\caption{Comparison of the sensitivity for the regularization parameter $\alpha$ in \MyAlgorithm\ using the average values of FID, LPIPS, IS, and SSIM~. For $\lambda$ values of $\{0.1, 0.2, 0.5, 1, 2, 3, 5\}$, $\alpha$ is varied over the range $\{1, 2, 4, 5\}$.}
~\label{fig:sens2}
\end{figure*}
In summary, the above analysis demonstrates that the proposed \MyAlgorithm\ algorithm is robust to variations in regularization parameter values. This means that when applying the \MyMethod\ framework to edit facial attributes, extensive adjustments to these parameters are not required. This not only simpliies the optimization process but also enhances the flexibility andreliability of framework in practical applications.
\begin{table}[h]
  \centering
  \caption{{Application summary. U-Face exposes attribute "sliders” ($\beta$) for controllable edits while monitoring identity drift.}}
  \label{tab:apps}
  \small
  \begin{tabular}{p{0.19\textwidth} p{0.24\textwidth} p{0.25\textwidth} p{0.26\textwidth}}
    \toprule
    \hline
    \textbf{{Scenario}} & \textbf{{I/O}} & \textbf{{User controls}} & \textbf{{Practical notes}} \\
    \midrule
    {Interactive entertainment (game/NPC authoring)} &
    {Input: designer seed or style assets; Output: edited portraits/textures} &
    {Sliders for Age, Gender, Hairstyle, Smile; $\beta\in[-0.3,0.3]$} &
    {Offline authoring; disentangled edits avoid side effects; export to DCC/engine pipelines.} \\
    {Digital avatars (meeting/social apps)} &
    {Input: selfie or short video; Output: avatar variants} &
    {Interactive sliders with preview; presets} &
    {Monitor identity similarity; cap edits if drift is large; latency per Table~\ref{tab:diffusion}.} \\
    {Virtual makeup / try-on} &
    {Input: photo; Output: before/after grid or slider preview} &
    {Smile intensity, Hairstyle volume, Age softening ($\beta$ maps to intensity)} &
    {Reduced entanglement keeps skin/texture consistent; provide reset/original toggle.} \\
    \hline
    \bottomrule
  \end{tabular}
\end{table}
\subsection{Practical Applications}
{Beyond subjective ratings, we summarize how U-Face can be used in three representative scenarios. Table~\ref{tab:apps}  outlines I/O, user controls, and practical notes, linking the study's findings (disentanglement and realism) to deployable workflows.}

{Integration sketch. A typical pipeline uses face detection \& alignment $\rightarrow$ (optional) inversion to the latent space $\rightarrow$ attribute editing via $\beta\,\mathbf{w}_i$ $\rightarrow$ rendering/export. The same controls used in the study directly transfer to these applications.}

\subsection{Limitations}
\label{sec:limitations}

{\textbf{Backbone dependence and dataset shift; transfer to unseen data.} 
\MyMethod\ operates in the latent space of a pre\text{-}trained generator, so its behavior inherits the training distribution of the chosen backbone (e.g., FFHQ vs.\ CelebA\text{-}HQ). Distribution shifts across datasets/backbones and deployment on arbitrary real images—typically requiring an external inversion step—can introduce reconstruction residuals and demographic mismatch, which may reduce fidelity and disentanglement.}

{\textbf{Extreme magnitudes and attribute composition.} Very large $|\beta|$ or sequential/multi\text{-}attribute edits can re\text{-}entangle attributes and induce identity drift or artifacts (see Fig.~\ref{fig:hair_plus}-\ref{fig:smile_plus}).}

{\textbf{Mitigations and future directions.} 
In practice, we cap $|\beta|$, prefer multi\text{-}step small edits, and monitor identity similarity; we also re\text{-}normalize $\B{S}$ and sweep $\lambda$ within a small range to improve stability. While these measures alleviate failures, they do not fully eliminate them under severe shifts or extreme edits. Future work will explore boundary refinement and lightweight domain adaptation of $\B{S}$, as well as tighter integration with inversion for real\text{-}image use.}

\begin{figure}[h]
  \centering
  \begin{subfigure}{0.45\textwidth}
      \centering
      \includegraphics[width=\linewidth]{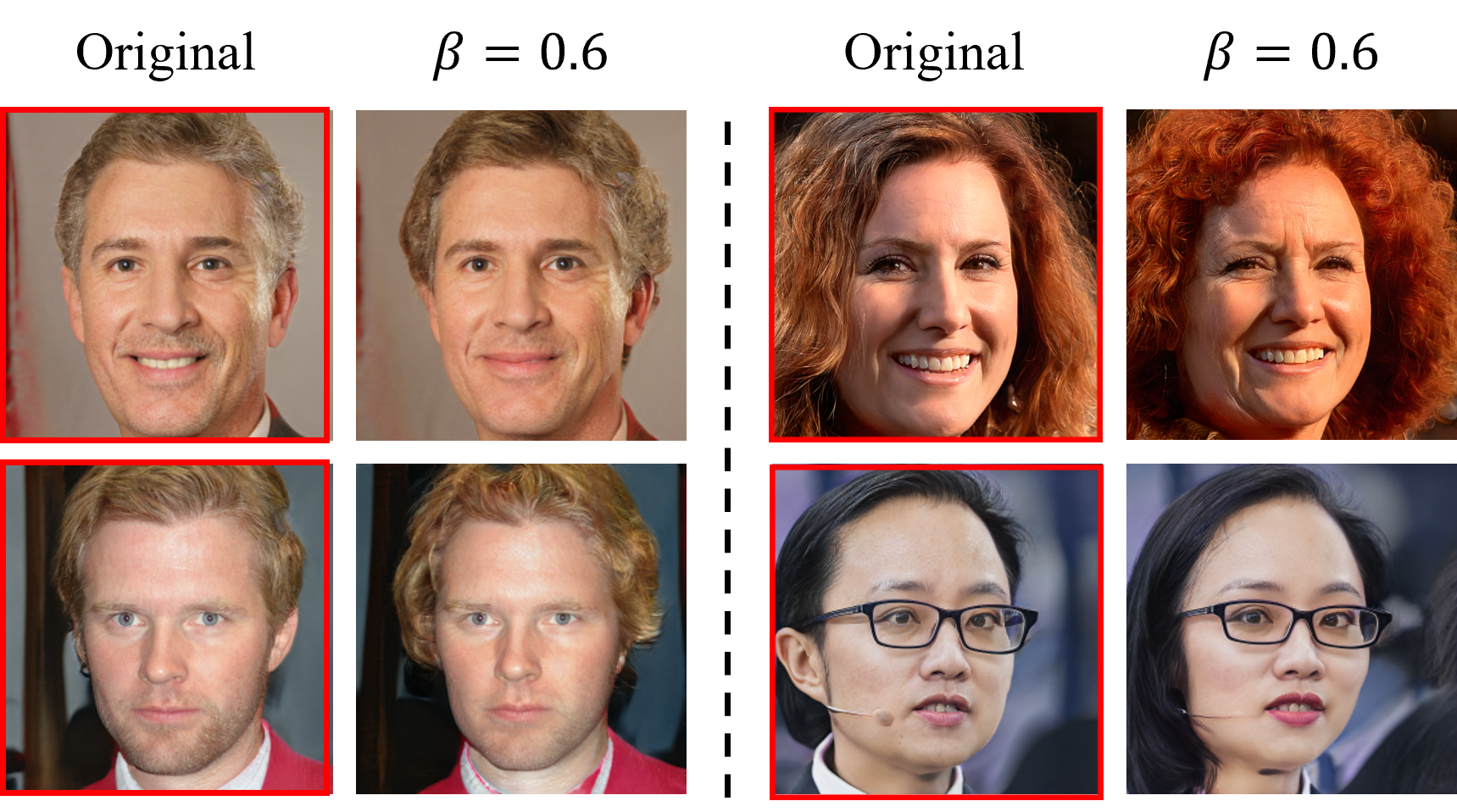}
      \caption{Hairstyle attribute editing visualization}
      \label{fig:hair_plus}
  \end{subfigure}
  \hfill
  \begin{subfigure}{0.45\textwidth}
      \centering
      \includegraphics[width=\linewidth]{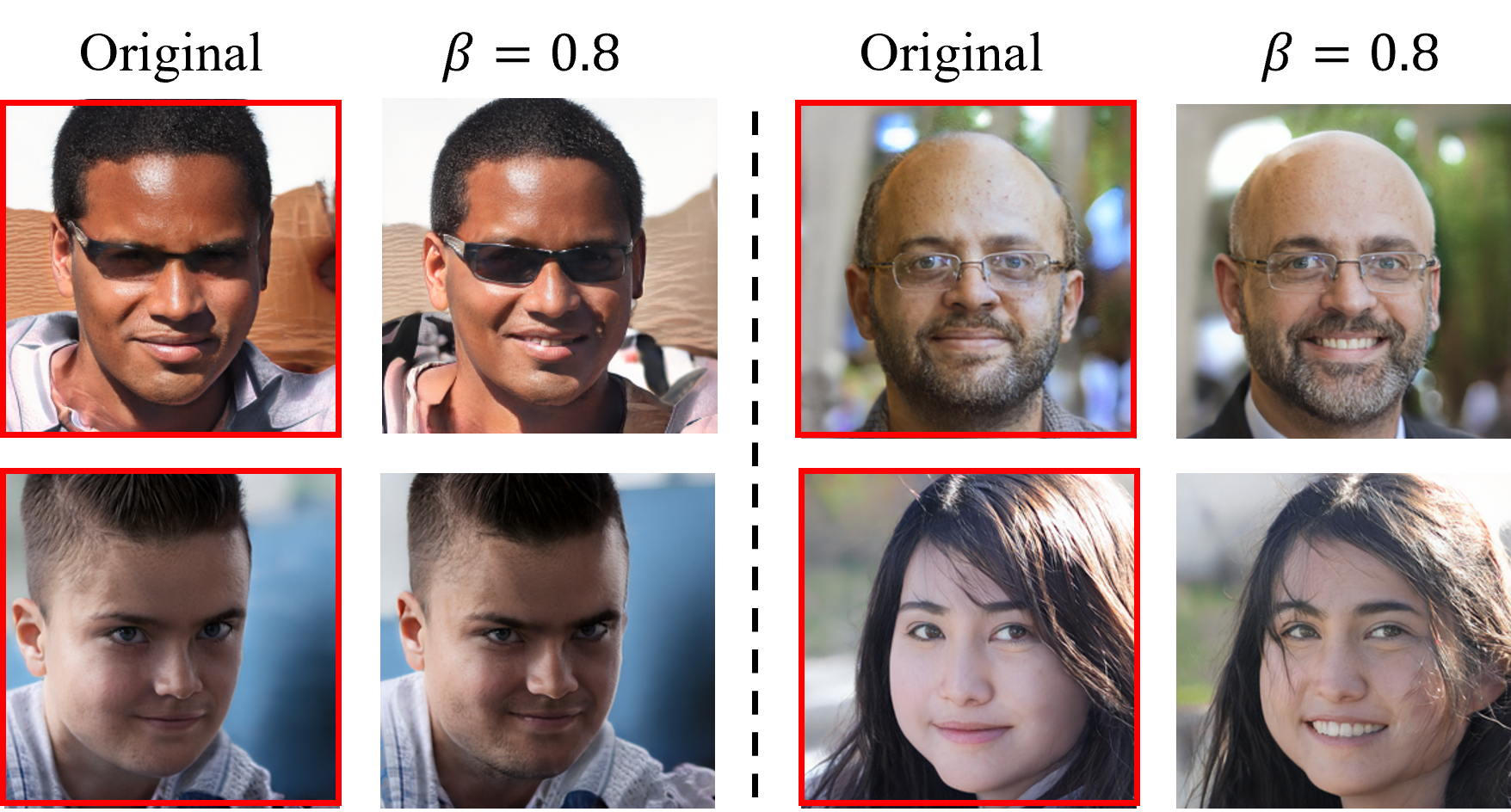}
      \caption{Smile attribute editing visualization}
      \label{fig:smile_plus}
  \end{subfigure}
  \caption{Visualization of Limitations on StyleGAN by \MyMethod.}
  \label{fig:attribute_editing}
\end{figure}

\section{Conclusion}\label{sec:conclusion}
This paper presents \MyMethod, a novel {unsupervised} framework for facial attribute editing that achieves high semantic disentanglement without relying on labeled data. The framework formulates semantic vector learning as a subspace learning problem on the latent vectors of a pre-trained GAN, providing a principled and efficient way to capture meaningful semantic directions. To enhance disentanglement and controllability, we introduced orthogonal {non-negative} constraints to enforce independence among semantic directions and incorporated attribute boundary vectors to steer them away from entangled directions. Furthermore, we developed an alternating iterative algorithm, \MyAlgorithm, with closed-form updates and provable convergence, ensuring both efficiency and stability. Extensive experiments across multiple pre-trained GAN generators demonstrated that \MyMethod\ consistently outperforms state-of-the-art supervised and unsupervised baselines, and achieves editing quality comparable to or better than recent diffusion-based methods while being faster in inference. User studies further corroborated these findings, showing clear advantages in perceptual realism and disentanglement, thereby validating the robustness of the framework.\par

Beyond methodological contributions, \MyMethod\ offers strong practical value. Its efficiency and controllability make it well suited for {real-time} and large-scale applications such as \textbf{virtual avatar customization, digital entertainment, aesthetic enhancement systems, and human-computer interaction platforms}. 
In future work, we plan to extend the proposed method to more expressive neural network architectures, such as deep autoencoders, to further improve disentanglement and interpretability. Moreover, we aim to explore its adaptation to broader cross-domain editing tasks, including medical image analysis and AR/VR scenarios, to further demonstrate its versatility and impact in practical applications.
\section*{Declaration of competing interest}
The authors declare that they have no known competing financial interests or personal relationships that could have appeared to influence the work reported in this paper.
\section*{Acknowledgments}																																														
This work was supported by the Natural Science Foundation of Chongqing Joint Fund for Innovation and Development (Grant No. CSTB2025NSCQ-LZX0134); the Science and Research Development Center of the Ministry of Education (Grant No. 2023ZY024); the Science and Technology Research Program of Chongqing Municipal Education Commission (Grant No. KJZD-K202400809); the Natural Science Foundation of Chongqing (Grant No. CSTB2023NSCQ-LZX0029); the Graduate Innovation Project of Chongqing Technology and Business University (Grant No. yjscxx2025-269-181).

{\balance
\bibliographystyle{elsarticle-num-names}
\bibliography{UFace}

@inproceedings{RetrievFace2025,
author = {Tian, Lulu and Yao, Hongxun},
title = {RetrievFace: Retrieval-Enhanced Diffusion for Controllable Text-Guided Face Editing},
year = {2025},
booktitle = {Proceedings of the 2025 International Conference on Multimedia Retrieval},
pages = {1273-1282},
}

@inproceedings{DiT4Edit2025,
author = {Feng, Kunyu and Ma, Yue and Wang, Bingyuan and Qi, Chenyang and Chen, Haozhe and Chen, Qifeng and Wang, Zeyu},
title = {DiT4Edit: diffusion transformer for image editing},
year = {2025},
booktitle = {Proceedings of the Thirty-Ninth AAAI Conference on Artificial Intelligence and Thirty-Seventh Conference on Innovative Applications of Artificial Intelligence and Fifteenth Symposium on Educational Advances in Artificial Intelligence},
pages = {},
}

@ARTICLE{11048915,
  author={Chen, Zhichao and Yang, Jie and Chen, Lifang and Li, Fan and Feng, Zhicheng and Jia, Limin and Li, Pan},
  journal={IEEE Internet of Things Journal}, 
  title={RailVoxelDet: A Lightweight 3-D Object Detection Method for Railway Transportation Driven by Onboard LiDAR Data}, 
  year={2025},
  volume={12},
  pages={37175-37189}
  }

@article{Wei2025MagicFaceHF,
  title={MagicFace: High-Fidelity Facial Expression Editing with Action-Unit Control},
  author={Mengting Wei and Tuomas Varanka and Xingxun Jiang and Huai-Qian Khor and Guoying Zhao},
  journal={ArXiv},
  year={2025},
  volume={abs/2501.02260},
}

@inproceedings{hou2025zeroshotfaceeditingidattribute,
  author = { Hou, Yang and Wang, Minggu and Zhao, Jianjun },
  booktitle = { 2025 IEEE International Conference on Multimedia and Expo (ICME) },
  title = {Zero-shot Face Editing via ID-Attribute Decoupled Inversion},
  year = {2025},
  pages = {1-6},
}

@ARTICLE{DRL2024,
  author={Wang, Xin and Chen, Hong and Tang, Si'ao and Wu, Zihao and Zhu, Wenwu},
  journal={IEEE Transactions on Pattern Analysis and Machine Intelligence}, 
  title={Disentangled Representation Learning}, 
  year={2024},
  pages={9677-9696},
}

@article{Pearson1896Mathematical,
  title={Mathematical Contributions to the Theory of Evolution.--On a Form of Spurious Correlation Which May Arise When Indices Are Used in the Measurement of Organs},
  author={Pearson and Karl},
  journal={Proceedings of the Royal Society of London},
  volume={60},
  pages={489-498},
  year={1896},
}

@INPROCEEDINGS{cds2024,
  author={Nam, Hyelin and Kwon, Gihyun and Park, Geon Yeong and Ye, Jong Chul},
  booktitle = {Proceedings of the IEEE/CVF Conference on Computer Vision and Pattern Recognition},
  title={Contrastive Denoising Score for Text-Guided Latent Diffusion Image Editing}, 
  year={2024},
  pages={9192-9201},
}

@INPROCEEDINGS{SP-DiffFusion2025,
  author={Chang, Jia and Li, Xujun},
  booktitle={2025 5th International Symposium on Computer Technology and Information Science}, 
  title={SP-DiffFusion: Diffusion-Based Infrared-Visible Fusion via Subspace Learning}, 
  year={2025},
  pages={281-286},
}

@article{Chen2023FaceAV,
  title={Face Aging via Diffusion-based Editing},
  author={Xiangyi Chen and St'ephane Lathuiliere},
  journal={ArXiv},
  year={2023},
  volume={abs/2309.11321},
}

@article{Wang2024DiffFAEAH,
  title={DiffFAE: Advancing High-fidelity One-shot Facial Appearance Editing with Space-sensitive Customization and Semantic Preservation},
  author={Qilin Wang and Jiangning Zhang and Chengming Xu and Weijian Cao and Ying Tai and Yue Han and Yanhao Ge and Hong Gu and Chengjie Wang and Yanwei Fu},
  journal={ArXiv},
  year={2024},
  volume={abs/2403.17664},
}

@INPROCEEDINGS{DiffusionCLIP2022,
  author={Kim, Gwanghyun and Kwon, Taesung and Ye, Jong Chul},
  booktitle = {Proceedings of the IEEE/CVF Conference on Computer Vision and Pattern Recognition},
  title={DiffusionCLIP: Text-Guided Diffusion Models for Robust Image Manipulation}, 
  year={2022},
  pages={2416-2425},
}

@ARTICLE{OSR2025,
  author={Jiang, Hongxiang and Luo, Xiaoyan and Yin, Jihao and Fu, Huazhu and Wang, Fuxiang},
  journal={IEEE Transactions on Neural Networks and Learning Systems}, 
  title={Orthogonal Subspace Representation for Generative Adversarial Networks}, 
  year={2025},
  volume={36},
  pages={4413-4427},
}

@INPROCEEDINGS{dds2023,
  author={Hertz, Amir and Aberman, Kfir and Cohen-Or, Daniel},
  booktitle = {Proceedings of the IEEE/CVF International Conference on Computer Vision},
  title={Delta Denoising Score}, 
  year={2023},
  pages={2328-2337},
}

@INPROCEEDINGS{FPE2024,
  author={Liu, Bingyan and Wang, Chengyu and Cao, Tingfeng and Jia, Kui and Huang, Jun},
  booktitle = {Proceedings of the IEEE/CVF Conference on Computer Vision and Pattern Recognition},
  title={Towards Understanding Cross and Self-Attention in Stable Diffusion for Text-Guided Image Editing}, 
  year={2024},
  pages={7817-7826},
}

@article{REN2025127245,
title = {Facial attribute editing via a Balanced Simple Attention Generative Adversarial Network},
journal = {Expert Systems with Applications},
volume = {277},
pages = {127245},
year = {2025},
author = {Fanghui Ren and Wenpeng Liu and Fasheng Wang and Bo Wang and Fuming Sun},
}

@article{TIAN20181,
title = {Photo-realistic 2D expression transfer based on FFT and modified Poisson image editing},
journal = {Neurocomputing},
volume = {309},
pages = {1-10},
year = {2018},
author = {Chunna Tian and Haiyang Li and Xinbo Gao},
}

@article{DOGAN2020338,
title = {Semi-supervised image attribute editing using generative adversarial networks},
journal = {Neurocomputing},
volume = {401},
pages = {338-352},
year = {2020},
author = {Yahya Dogan and Hacer Yalim Keles},
}

@inproceedings{GANSpace2020,
title={Ganspace: Discovering interpretable gan controls},
author={H{\"a}rk{\"o}nen, Erik and Hertzmann, Aaron and Lehtinen, Jaakko and Paris, Sylvain},
booktitle={Advances in neural information processing systems},
volume={33},
pages={9841--9850},
year={2020}
}

@article{InterFaceGAN2020,
  title={InterFaceGAN: Interpreting the Disentangled Face Representation Learned by GANs},
  author={Yujun Shen and Ceyuan Yang and Xiaoou Tang and Bolei Zhou},
  journal={IEEE Transactions on Pattern Analysis and Machine Intelligence},
  year={2020},
  volume={44},
  pages={2004-2018},
}

@inproceedings{SEFA2021,
  author={Yujun Shen and Bolei Zhou},
  booktitle={Proceedings of the IEEE/CVF Conference on Computer Vision and Pattern Recognition}, 
  title={Closed-Form Factorization of Latent Semantics in GANs}, 
  year={2021},
  pages={1532-1540},
}

@inproceedings{StyleCLIP2021ICCV,
  author={Patashnik, Or and Wu, Zongze and Shechtman, Eli and Cohen-Or, Daniel and Lischinski, Dani},
  booktitle={Proceedings of the IEEE/CVF International Conference on Computer Vision}, 
  title={StyleCLIP: Text-Driven Manipulation of StyleGAN Imagery}, 
  year={2021},
  pages={2065-2074}
}

@inproceedings{ResNet2016CVPR,
  title={Deep Residual Learning for Image Recognition},
  author={He, Kaiming and Zhang, Xiangyu and Ren, Shaoqing and Sun, Jian},
  booktitle={Proceedings of the IEEE/CV Conference on Computer Vision and Pattern Recognition},
  pages={770--778},
  year={2016}
}

@inproceedings{SEResNet2018CVPR,
  title={Squeeze-and-Excitation Networks},
  author={Hu, Jie and Shen, Li and Sun, Gang},
  booktitle={Proceedings of the IEEE/CV Conference on Computer Vision and Pattern Recognition},
  pages={7132--7141},
  year={2018}
}

@article{LI201931,
title = {His-GAN: A histogram-based GAN model to improve data generation quality},
journal = {Neural Networks},
pages = {31-45},
year = {2019},
author = {Wei Li and Wei Ding and Rajani Sadasivam and Xiaohui Cui and Ping Chen},

}

@article{SHAO2021107311,
title = {DMDIT: Diverse multi-domain image-to-image translation},
journal = {Knowledge-Based Systems},
pages = {107311},
year = {2021},
author = {Mingwen Shao and Youcai Zhang and Huan Liu and Chao Wang and Le Li and Xun Shao},
}

@inproceedings{SDFlow2024ICASSP,
  author={Li, Binglei and Huang, Zhizhong and Shan, Hongming and Zhang, Junping},
  booktitle={ICASSP 2024 - 2024 IEEE International Conference on Acoustics, Speech and Signal Processing}, 
  title={Semantic Latent Decomposition with Normalizing Flows for Face Editing}, 
  year={2024},
  pages={4165-4169},
}

@inproceedings{MDSE2023,
  author={Naveh, Chen},
  booktitle={Proceedings of the IEEE/CVF International Conference on Computer Vision}, 
  title={Multi-Directional Subspace Editing in Style-Space}, 
  year={2023},
  pages={7104-7114},
  }

@inproceedings{AdaTrans2023,
    author    = {Huang, Zhizhong and Ma, Siteng and Zhang, Junping and Shan, Hongming},
    title     = {Adaptive Nonlinear Latent Transformation for Conditional Face Editing},
    booktitle = {Proceedings of the IEEE/CVF International Conference on Computer Vision},
    year      = {2023},
    pages     = {21022-21031}
}

@inproceedings{StarGAN2019,
  title={StarGAN v2: Diverse Image Synthesis for Multiple Domains},
  author={Yunjey Choi and Youngjung Uh and Jaejun Yoo and Jung-Woo Ha},
  booktitle={Proceedings of the IEEE/CVF Conference on Computer Vision and Pattern Recognition},
  year={2019},
  pages={8185-8194},
}

@inproceedings{StyleTrans2021,
author = {Lin, Peizhen and Liu, Baoyu and Wang, Lei and Lei, Zetong and Cheng, Jun},
title = {Face Translation based on Semantic Style Transfer and Rendering from One Single Image},
year = {2021},
booktitle = {Proceedings of the 2021 10th International Conference on Software and Computer Applications},
pages = {166--172},
}

@article{UVCGAN2022,
  title={UVCGAN: UNet Vision Transformer cycle-consistent GAN for unpaired image-to-image translation},
  author={Dmitrii Torbunov and Yi Huang and Haiwang Yu and Jin-zhi Huang and Shinjae Yoo and Meifeng Lin and Brett Viren and Yihui Ren},
  journal={2023 IEEE/CVF Winter Conference on Applications of Computer Vision},
  year={2022},
  pages={702-712},
}

@article{SUN2024109346,
title = {A Fine Rendering High-Resolution Makeup Transfer network via inversion-editing strategy},
journal = {Engineering Applications of Artificial Intelligence},
pages = {109346},
year = {2024},
author = {Zhaoyang Sun and Shengwu Xiong and Yaxiong Chen and Yi Rong},
}

@article{GUO2024108683,
title = {Enhancing accuracy, diversity, and random input compatibility in face attribute manipulation},
journal = {Engineering Applications of Artificial Intelligence},
pages = {108683},
year = {2024},
author = {Qi Guo and Xiaodong Gu},
}

@article{YANG2023105519,
title = {DFSGAN: Introducing editable and representative attributes for few-shot image generation},
journal = {Engineering Applications of Artificial Intelligence},
pages = {105519},
year = {2023},
author = {Mengping Yang and Saisai Niu and Zhe Wang and Dongdong Li and Wenli Du},
}

@inproceedings{ICLR2020Steer,
 author = {Jahanian, Ali and Chai, Lucy and Isola, Phillip},
 title = {On the steerability of generative adversarial networks},
 booktitle = {Proceedings of International Conference on Learning Representations},
 year = {2020}
}

@inproceedings{semanticBound2020,
title={Interpreting the latent space of gans for semantic face editing},
author={Shen, Yujun and Gu, Jinjin and Tang, Xiaoou and Zhou, Bolei},
booktitle={Proceedings of the IEEE/CVF conference on computer vision and pattern recognition},
pages={9243--9252},
year={2020}
}

@inproceedings{Karras.2020,
 title={Analyzing and improving the image quality of stylegan},
  author={Karras, Tero and Laine, Samuli and Aittala, Miika and Hellsten, Janne and Lehtinen, Jaakko and Aila, Timo},
  booktitle={Proceedings of the IEEE/CVF conference on computer vision and pattern recognition},
  pages={8110--8119},
  year={2020}
}

@article{Suo.2011,
 author = {Suo, Jinli and Lin, Liang and Shan, Shiguang and Chen, Xilin and Gao, Wen},
 year = {2011},
 title = {High-Resolution Face Fusion for Gender Conversion},
 pages = {226--237},
 volume = {41},
 journal = {IEEE Transactions on Systems, Man, and Cybernetics - Part A: Systems and Humans},
}

@inproceedings{Li.2012,
 author = {Li, Kai and Xu, Feng and Wang, Jue and Dai, Qionghai and Liu, Yebin},
 title = {A data-driven approach for facial expression synthesis in video},
 pages = {57--64},             
 booktitle = {Proceedings of the IEEE/CV Conference on Computer Vision and Pattern Recognition},
 year = {2012},
}

@inproceedings{KemelmacherShlizerman.2014,
 author = {Kemelmacher-Shlizerman, Ira and Suwajanakorn, Supasorn and Seitz, Steven M.},
 title = {Illumination-Aware Age Progression},
 pages = {3334--3341},                
 booktitle = {Proceedings of the IEEE/CV Conference on Computer Vision and Pattern Recognition},
 year = {2014},
}

@inproceedings{Garrido.2014,
 author = {Garrido, Pablo and Valgaerts, Levi and Rehmsen, Ole and Thormaehlen, Thorsten and Perez, Patrick and Theobalt, Christian},
 title = {Automatic Face Reenactment},
 pages = {4217--4224},
 booktitle = {Proceedings of the IEEE/CV Conference on Computer Vision and Pattern Recognition},
 year = {2014},
}

@inproceedings{Davis.2022,
 author = {Davis, Keith M. and de {La Torre-Ortiz}, Carlos and Ruotsalo, Tuukka},
 title = {Brain-Supervised Image Editing},
 pages = {18459--18468},
 booktitle = {Proceedings of the IEEE/CV Conference on Computer Vision and Pattern Recognition},
 year = {2022},
}

@article{AttGAN2019,
 author = {He, Zhenliang and Zuo, Wangmeng and Kan, Meina and Shan, Shiguang and Chen, Xilin},
 year = {2019},
 title = {AttGAN: Facial Attribute Editing by Only Changing What You Want},
 pages = {5464--5478},
 volume = {28},
 journal = {IEEE Transactions on Image Processing : a publication of the IEEE Signal Processing Society},
}

@article{PatchNet2013,
author = {Hu, Shi-Min and Zhang, Fang-Lue and Wang, Miao and Martin, Ralph R. and Wang, Jue},
title = {PatchNet: a patch-based image representation for interactive library-driven image editing},
year = {2013},
journal = {ACM Transactions on Graphics},
}

@INPROCEEDINGS{AGE2022CVPR,
  author={Ding, Guanqi and Han, Xinzhe and Wang, Shuhui and Wu, Shuzhe and Jin, Xin and Tu, Dandan and Huang, Qingming},
  booktitle={Proceedings of the IEEE/CV Conference on Computer Vision and Pattern Recognition}, 
  title={Attribute Group Editing for Reliable Few-shot Image Generation}, 
  year={2022},
  pages={11184-11193},
}

@inproceedings{EnjoyGAN2021,
  author = {P. Zhuang and O. Koyejo and A.~G. Schwing},
  title = {{Enjoy Your Editing: Controllable GANs for Image Editing via Latent Space Navigation}},
  booktitle = {Proceedings of International Conference on Learning Representations},
  year = {2021},
}

@INPROCEEDINGS{STGAN2019,
  title={STGAN: A Unified Selective Transfer Network for Arbitrary Image Attribute Editing},
  author={Ming Liu and Yukang Ding and Min Xia and Xiao Liu and Errui Ding and Wangmeng Zuo and Shilei Wen},
  booktitle={Proceedings of the IEEE/CV Conference on Computer Vision and Pattern Recognition},
  year={2019},
  pages={3668-3677},
}

@inproceedings{IALS2021IJCAI,
  title     = {Disentangled Face Attribute Editing via Instance-Aware Latent Space Search},
  author    = {Han, Yuxuan and Yang, Jiaolong and Fu, Ying},
  booktitle = {Proceedings of the Thirtieth International Joint Conference on
               Artificial Intelligence},
  pages     = {715--721},
  year      = {2021},
}

@article{LI2023103916,
title = {Controllable facial attribute editing via Gaussian mixture model disentanglement},
journal = {Digital Signal Processing},
volume = {134},
pages = {103916},
year = {2023},
author = {Bo Li and Shu-Hai Deng and Bin Liu and Yike Li and Zhi-Fen He and Yu-Kun Lai and Congxuan Zhang and Zhen Chen},
}

@inproceedings{Huang2024SDGANDS,
  title={SDGAN: Disentangling Semantic Manipulation for Facial Attribute Editing},
  author={Wenmin Huang and Weiqi Luo and Jiwu Huang and Xiaochun Cao},
  booktitle={AAAI Conference on Artificial Intelligence},
  year={2024},
}

@ARTICLE{TPAMI2023VecGAN,
  author={Dalva, Yusuf and Pehlivan, Hamza and Hatipoglu, Oyku Irmak and Moran, Cansu and Dundar, Aysegul},
  journal={IEEE Transactions on Pattern Analysis and Machine Intelligence}, 
  title={Image-to-Image Translation With Disentangled Latent Vectors for Face Editing}, 
  year={2023},
  volume={45},
  pages={14777-14788},
}

@article{TIP2022STIAWO,
 author = {Liu, Kanglin and Cao, Gaofeng and Zhou, Fei and Liu, Bozhi and Duan, Jiang and Qiu, Guoping},
 year = {2022},
 title = {Towards Disentangling Latent Space for Unsupervised Semantic Face Editing},
 pages = {1475--1489},
 volume = {31},
 journal = {IEEE Transactions on Image Processing },
}

@article{MaskFaceGAN2023,
  author={Pernuš, Martin and Štruc, Vitomir and Dobrišek, Simon},
  journal={IEEE Transactions on Image Processing}, 
  title={MaskFaceGAN: High-Resolution Face Editing With Masked GAN Latent Code Optimization}, 
  year={2023},
  volume={32},
  pages={5893-5908},
}

@INPROCEEDINGS{Yao2021ICCV,
  author={Yao, Xu and Newson, Alasdair and Gousseau, Yann and Hellier, Pierre},
  booktitle={Proceedings of the IEEE/CVF International Conference on Computer Vision}, 
  title={A Latent Transformer for Disentangled Face Editing in Images and Videos}, 
  year={2021},
  pages={13769-13778},
}

@inproceedings {Van2021,
author = {T. Van and T. Nguyen and N. N. Tran and H. Nguyen and L. Doan and H. Dao and T. Minh},
booktitle = {2020 International Conference on Advanced Computing and Applications},
title = {Interpreting the Latent Space of Generative Adversarial Networks using Supervised Learning},
year = {2020},
pages = {49-54},
}

@inproceedings{VoynovB20,
  author    = {Andrey Voynov and                Artem Babenko},
  title     = {Unsupervised Discovery of Interpretable Directions in the GAN Latent Space},
  booktitle = {Proceedings of International Conference on Machine Learning},
  volume    = {119},
  pages     = {9786--9796},
  year      = {2020}
 }

@inproceedings{Cherepkov.2021,
 author = {Cherepkov, Anton and Voynov, Andrey and Babenko, Artem},
 title = {Navigating the GAN Parameter Space for Semantic Image Editing},
 pages = {3670--3679},
 booktitle = {Proceedings of the IEEE/CV Conference on Computer Vision and Pattern Recognition},
 year = {2021},
}

@inproceedings{ICLR2021Regres,
title={Using latent space regression to analyze and leverage compositionality in GAN},
author={Lucy Chai and Jonas Wulff and Phillip Isola},
booktitle={Proceedings of International Conference on Learning Representations},
year={2021},
}

@inproceedings{ICLR2022Escape,
title={Do Not Escape From the Manifold: Discovering the Local Coordinates on the Latent Space of {GAN}s},
author={Jaewoong Choi and Junho Lee and Changyeon Yoon and Jung Ho Park and Geonho Hwang and Myungjoo Kang},
booktitle={Proceedings of International Conference on Learning Representations},
year={2022},
}

@inproceedings{SOCFS_CVPR_2015,
 author = {Han, Dongyoon and Kim, Junmo},
 title = {Unsupervised Simultaneous Orthogonal basis Clustering Feature Selection},
 pages = {5016--5023},
 booktitle = {Proceedings of the IEEE/CV Conference on Computer Vision and Pattern Recognition},
 year = {2015},
}

@inproceedings{StyleGAN2019,
  title={A style-based generator architecture for generative adversarial networks},
  author={Tero Karras and Samuli Laine and Timo Aila},
  booktitle={Proceedings of the IEEE/CV Conference on Computer Vision and Pattern Recognition},
  pages={4401--4410},
  year={2019}
}

@inproceedings{ProGAN2018,
      title={Progressive Growing of GANs for Improved Quality, Stability, and Variation}, 
      author={Tero Karras and Timo Aila and Samuli Laine and Jaakko Lehtinen},
      year={2018},
      booktitle={Proceedings of International Conference on Learning Representations},
}

@article{TPAMI2024FGE,
  author       = {Andrew Melnik and
                  Maksim Miasayedzenkau and
                  Dzianis Makarovets and
                  Dzianis Pirshtuk and
                  Eren Akbulut and
                  Dennis Holzmann and
                  Tarek Renusch and
                  Gustav Reichert and
                  Helge J. Ritter},
  title        = {Face Generation and Editing With StyleGAN: {A} Survey},
  journal      = {IEEE Transactions on Pattern Analysis and Machine Intelligence},
  volume       = {46},
  pages        = {3557--3576},
  year         = {2024},
}

@inproceedings{NIPS2017FID,
 author = {Heusel, Martin and Ramsauer, Hubert and Unterthiner, Thomas and Nessler, Bernhard and Hochreiter, Sepp},
 booktitle = {Proceedings of Advances in Neural Information Processing Systems},
 pages = {pages = {6629-6640}},
 title = {GANs Trained by a Two Time-Scale Update Rule Converge to a Local Nash Equilibrium},
 volume = {30},
 year = {2017}
}

@INPROCEEDINGS{CVPR2018LPIPS,
  author={Zhang, Richard and Isola, Phillip and Efros, Alexei A. and Shechtman, Eli and Wang, Oliver},
  booktitle={Proceedings of the IEEE/CVF Conference on Computer Vision and Pattern Recognition}, 
  title={The Unreasonable Effectiveness of Deep Features as a Perceptual Metric}, 
  year={2018},
  pages={586-595}
  }

@ARTICLE{TIP2004SSIM,
  author={Zhou Wang and Bovik, A.C. and Sheikh, H.R. and Simoncelli, E.P.},
  journal={IEEE Transactions on Image Processing}, 
  title={Image quality assessment: from error visibility to structural similarity}, 
  year={2004},
  volume={13},
  pages={600-612}
 }

@inproceedings{NIP2016IS,
 author = {Salimans, Tim and Goodfellow, Ian and Zaremba, Wojciech and Cheung, Vicki and Radford, Alec and Chen, Xi},
 title = {Improved techniques for training GANs},
 year = {2016},
 pages = {2234-2242},
 booktitle = {Proceedings of Advances in Neural Information Processing Systems}
 }

@article{KBS2024ANSC,
author = {Bo Liu and Wenbo Li and Jie Li and Xuan Cui and Chongwen Liu and Hongping Gan},
title = {Attention non-negative spectral clustering},
journal = {Knowledge-Based Systems},
volume = {294},
pages = {111695},
year = {2024}
}

@inproceedings{AgeEstimationCVPR2019,
  author = {Zhang, Chao and Liu, Shuaicheng and Xu, Xun and Zhu, Ce},
  booktitle = {Proceedings of the IEEE/CVF Conference on Computer Vision and Pattern Recognition},
  pages = {12587--12596},
  title = {C3AE: Exploring the Limits of Compact Model for Age Estimation},
  year = {2019}
}
}
\clearpage

\appendix
\setcounter{figure}{0} 
\setcounter{thm}{0} 
\section{Supplementary Material for the Proof of Theorem~\ref{thm:Convergence} in main text}
\label{appendixB}
\begin{thm}[Monotone descent and first-order stationarity]
  {Let $\{(\B{W}^t,\B{P}^t,\B{F}^t)\}$ be the sequence generated by \MyAlgorithm\ with $\alpha>0$ and $\lambda\ge 0$. Then the objective values are non-increasing:}
    { 
    \begin{equation}
    J(\B{W}^{t+1},\B{P}^{t+1},\B{F}^{t+1}) \le J(\B{W}^t,\B{P}^{t},\B{F}^{t})\quad\text{for all } t.
    \end{equation}}
  \end{thm}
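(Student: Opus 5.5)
The plan is to read \MyAlgorithm\ as exact block-coordinate minimization of $J$ in Eq.~\eqref{JWEq} over the feasible set $\{\B{W}^\top\B{W}=\B{I}\}\times\C{R}^{k\times n}\times\{\B{F}\ge\B{0}\}$, and to chain three inequalities:
\begin{equation*}
J(\B{W}^{t},\B{P}^{t},\B{F}^{t})\ \ge\ J(\B{W}^{t+1},\B{P}^{t},\B{F}^{t})\ \ge\ J(\B{W}^{t+1},\B{P}^{t+1},\B{F}^{t})\ \ge\ J(\B{W}^{t+1},\B{P}^{t+1},\B{F}^{t+1}).
\end{equation*}
Each inequality holds if the corresponding update returns a global minimizer of $J$ in that block, with the other two blocks fixed, over the block's feasible set. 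The previous iterate is feasible for that block, since $\B{W}^t$ is orthonormal (by initialization or by the previous SVD step) and $\B{F}^t\ge\B{0}$ (by initialization or by the previous projection). Hence its objective value cannot be smaller than the minimum.

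\emph{The $\B{W}$-step.} This is the only step needing real care, because of the concave term $-\lambda\|\B{W}-\B{S}\|_\text{F}^2$. The key is that on the Stiefel manifold every quadratic term in $\B{W}$ is constant. Expand each term using $\B{W}^\top\B{W}=\B{I}$:
\begin{itemize}
\item $\|\B{Z}-\B{W}\B{P}\|_\text{F}^2=\|\B{Z}\|_\text{F}^2-2\,\mathrm{tr}(\B{W}^\top\B{Z}\B{P}^\top)+\|\B{P}\|_\text{F}^2$;
\item $\alpha\|\B{W}-\B{F}\|_\text{F}^2=\alpha k-2\alpha\,\mathrm{tr}(\B{W}^\top\B{F})+\alpha\|\B{F}\|_\text{F}^2$;
\item $-\lambda\|\B{W}-\B{S}\|_\text{F}^2=-\lambda k+2\lambda\,\mathrm{tr}(\B{W}^\top\B{S})-\lambda\|\B{S}\|_\text{F}^2$.
\end{itemize}
So, up to constants, $J=-2\,\mathrm{tr}(\B{W}^\top\B{A})$ with $\B{A}=\B{Z}\B{P}^\top+\alpha\B{F}-\lambda\B{S}$. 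This is the content of Theorem~\ref{thm:equivalent} and Eq.~\eqref{derive_W}, and it holds regardless of the sign of $\lambda$.

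It therefore remains to check that $\B{W}=\B{U}\B{I}_{m,k}\B{V}^\top$ maximizes $\mathrm{tr}(\B{W}^\top\B{A})$. Write $\B{A}=\B{U}\B{\Sigma}\B{V}^\top$. Then $\mathrm{tr}(\B{W}^\top\B{A})=\mathrm{tr}(\B{\Sigma}\,\B{V}^\top\B{W}^\top\B{U})$. The matrix $\B{Q}=\B{U}^\top\B{W}\B{V}$ has orthonormal columns, so $|Q_{ii}|\le1$, which gives $\mathrm{tr}(\B{W}^\top\B{A})\le\sum_i\sigma_i$. Equality is attained when $\B{Q}=\B{I}_{m,k}$; this is von Neumann's trace inequality, i.e.\ Proposition~1 of \cite{SOCFS_CVPR_2015}. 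I expect this step to be the main obstacle, mainly in stating the Stiefel reduction cleanly. Non-uniqueness of the SVD is harmless, since any maximizer works.

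\emph{The $\B{P}$-step.} Only $\|\B{Z}-\B{W}\B{P}\|_\text{F}^2$ depends on $\B{P}$. It is a convex quadratic, and its Hessian is $\B{W}^\top\B{W}=\B{I}$. The stationary point $\B{P}=\B{W}^\top\B{Z}$ from Eq.~\eqref{calculate_P} is therefore its unique global minimizer.

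\emph{The $\B{F}$-step.} Only $\alpha\|\B{W}-\B{F}\|_\text{F}^2$ depends on $\B{F}$, and $\alpha>0$. This term separates entrywise into $\alpha(W_{ij}-F_{ij})^2$ with $F_{ij}\ge0$, whose minimizer is $\max(W_{ij},0)$. That is exactly Eq.~\eqref{calculate_F}.

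Combining the three steps gives the chain above, which is the claimed monotonicity. For the convergence of $J$ stated in the main-text version, the plan is to add a lower bound. After the first $\B{P}$-step, $\|\B{Z}-\B{W}\B{P}\|_\text{F}^2\ge0$ and $\alpha\|\B{W}-\B{F}\|_\text{F}^2\ge0$. The remaining term satisfies $-\lambda\|\B{W}-\B{S}\|_\text{F}^2\ge-\lambda(\sqrt{k}+\|\B{S}\|_\text{F})^2$, because $\|\B{W}\|_\text{F}=\sqrt{k}$. A monotone sequence that is bounded below converges.
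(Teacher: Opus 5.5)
Your proposal is correct and follows the paper's proof: each update is an exact minimizer of $J$ over its own block with the other two blocks fixed, and chaining $J(\B{W}^{t},\B{P}^{t},\B{F}^{t})\ge J(\B{W}^{t+1},\B{P}^{t},\B{F}^{t})\ge J(\B{W}^{t+1},\B{P}^{t+1},\B{F}^{t})\ge J(\B{W}^{t+1},\B{P}^{t+1},\B{F}^{t+1})$ gives monotone descent. You add two things the paper leaves out: a self-contained check that the SVD, least-squares and projection formulas are global block minimizers (the paper defers this to its equivalence theorem and the cited Procrustes result), and the remark that the previous iterate must satisfy the block constraints, which for $t=1$ needs an orthonormal $\B{W}^1$ and a nonnegative $\B{F}^1$; this holds for the SVD-based initialization used in the experiments but is not automatic for the ``random'' initialization in Algorithm~1, and the paper leaves it implicit.
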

  
  \begin{proof}
  In the $t+1$ iteration, with $\B{F}^t$ and $\B{P}^t$ fixed, $\B{W}$ is updated using Eq.~\eqref{derive_W}, satisfying the following inequality: 
  {
    \begin{equation}
      \label{convergence_W}
      \begin{split}
      &\B{W}^{t+1}=\arg\min_{\B{W}^\top\B{W}=\B{I}} J(\B{W}^t,\B{P}^t,\B{F}^t)\\
      &\quad\quad\quad\Rightarrow
      J(\B{W}^{t+1},\B{P}^{t},\B{F}^{t})\le J(\B{W}^t,\B{P}^{t},\B{F}^{t}). \\
      \end{split}
    \end{equation}}
  With $\B{W}^{t+1}$ and $\B{F}^{t}$ fixed, $\B{P}^t$ is updated using Eq.~\eqref{calculate_P0}, satisfying the following inequality:
  {
    \begin{equation}
      \label{convergence_P}
      \begin{split}
      &\B{P}^{t+1}=\arg\min_{\B{P}} J(\B{W}^{t+1},\B{P},\B{F}^{t}) =\arg\min_{\B{P}} \|\B{Z}-{{\B{W}}^{t+1}\B{P}}\|_\text{F}^2\\
      &\quad\quad\quad\Rightarrow
      J(\B{W}^{t+1},\B{P}^{t+1},\B{F}^{t})\le J(\B{W}^{t+1},\B{P}^{t},\B{F}^{t}). \\
      \end{split}
    \end{equation}}
  With $\B{W}^{t+1}$ and $\B{P}^{t+1}$ fixed, $\B{F}$ is updated using Eq.~\eqref{objFunForF}, satisfying the following inequality.
    {
    \begin{equation}
    \label{convergence_F}
      \begin{split}
      &\B{F}^{t+1}=\arg\min_{\B{F}\ge 0} J(\B{W}^{t+1},\B{P}^{t+1},\B{F})= \arg\min_{\B{F}\ge 0} \| \B{F} - \B{W}^{t+1} \|_\text{F}^2\\
      &\quad\quad\Rightarrow 
      J(\B{W}^{t+1},\B{P}^{t+1},\B{F}^{t+1})\le J(\B{W}^{t+1},\B{P}^{t+1},\B{F}^{t}).\\ 
      \end{split}
    \end{equation}}
  By combining Eq.~\eqref{convergence_W} and Eq.~\eqref{convergence_P}, the following inequality is derived:
      { 
      \begin{equation}
      \label{convergence_W_P}
        J(\B{W}^{t+1},\B{P}^{t+1},\B{F}^{t}) \le J(\B{W}^{t+1},\B{P}^{t},\B{F}^{t}) \le J(\B{W}^t,\B{P}^{t},\B{F}^{t}).
      \end{equation}}
  Similarly, the following inequality applies for Eq.~\eqref{convergence_F} and Eq.~\eqref{convergence_W_P}:
    {
    \begin{equation}
      \label{proof_result}
      J(\B{W}^{t+1},\B{P}^{t+1},\B{F}^{t+1}) \le J(\B{W}^{t+1},\B{P}^{t+1},\B{F}^{t}) \le J(\B{W}^t,\B{P}^{t},\B{F}^{t}).
    \end{equation}}
  As shown in Eq.~\eqref{proof_result}, updating $\B{W}$, $\B{P}$ and $\B{F}$ according to \MyAlgorithm\ Algorithm guarantees that the objective in Eq.~\eqref{JWEq} monotonically decreases. 
  \end{proof}
\section{Supplementary Material for Theorem~\ref{thm:equivalent}}
\label{appendixA}
\begin{lemma}\label{lemma1}
  For a given matrix $\B{A} \in \C{R}^{m \times k}$, we consider the following optimization problem, where $\B{W}\in \C{R}^{m \times k}$ is the optimization variable, subject to orthogonal constraints:
{
  \begin{equation} 
   \label{lemmaEq1}
   \begin{split}
    \min_\B{W} \operatorname{\text{Tr}}(-\B{A}^\top \B{W}), \quad s.t \quad  \B{W}^\top\B{W}=\B{I},
   \end{split}
 \end{equation}}
 where Tr~(.) represents the trace of a matrix. Based on this operation, Eq.~\eqref{lemmaEq1} can be equivalently expressed as:
{
 \begin{equation} 
   \label{lemmaEq2}
   \begin{split}
    \min_\B{W}\|\B{A}-\B{W}\|_\text{F}^2, \quad s.t \quad  \B{W}^\top\B{W}=\B{I}.
 \end{split}
 \end{equation}}
\end{lemma}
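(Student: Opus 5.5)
The plan is to expand the Frobenius norm in Eq.~\eqref{lemmaEq2} and use the orthogonality constraint to show that the two objectives differ only by a positive factor and an additive constant. Because the feasible set $\{\B{W}\in\C{R}^{m\times k}:\B{W}^\top\B{W}=\B{I}\}$ is the same in both problems, they then have the same minimizers.

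First, write $\|\B{A}-\B{W}\|_\text{F}^2=\operatorname{Tr}\bigl((\B{A}-\B{W})^\top(\B{A}-\B{W})\bigr)$. Expanding by linearity of the trace, and using $\operatorname{Tr}(\B{W}^\top\B{A})=\operatorname{Tr}(\B{A}^\top\B{W})$ (invariance under transposition), gives
\begin{equation*}
\|\B{A}-\B{W}\|_\text{F}^2=\operatorname{Tr}(\B{A}^\top\B{A})-2\operatorname{Tr}(\B{A}^\top\B{W})+\operatorname{Tr}(\B{W}^\top\B{W}).
\end{equation*}

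Second, apply the constraint. On the feasible set $\operatorname{Tr}(\B{W}^\top\B{W})=\operatorname{Tr}(\B{I}_{k\times k})=k$. The term $\operatorname{Tr}(\B{A}^\top\B{A})=\|\B{A}\|_\text{F}^2$ does not depend on $\B{W}$. Hence, for every feasible $\B{W}$,
\begin{equation*}
\|\B{A}-\B{W}\|_\text{F}^2=2\operatorname{Tr}(-\B{A}^\top\B{W})+\|\B{A}\|_\text{F}^2+k.
\end{equation*}
The map $x\mapsto 2x+c$ is strictly increasing, so the two objectives induce the same ordering on feasible points. Their sets of minimizers therefore coincide, and the optimal values differ by the fixed affine transformation. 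I will also note that a minimizer exists, since the Stiefel manifold is compact and both objectives are continuous. This makes the equivalence an equivalence of attained optima, not only of infima.

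There is no real obstacle here. The one point to handle carefully is that the identity $\operatorname{Tr}(\B{W}^\top\B{W})=k$ holds only on the constraint set, so the equivalence is between the constrained problems and not the unconstrained ones. I will state this explicitly. If useful for the later Theorem~\ref{thm:equivalent}, I would also remark that the common minimizer is the Procrustes solution $\B{U}\B{I}_{m,k}\B{V}^\top$ from the SVD of $\B{A}$.
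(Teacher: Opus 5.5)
Your proposal is correct and follows essentially the same route as the paper. The paper adds the constants $\operatorname{Tr}(\B{A}^\top\B{A})$ and $\operatorname{Tr}(\B{W}^\top\B{W})$, together with the factor $2$, to $\operatorname{Tr}(-\B{A}^\top\B{W})$ to rebuild $\|\B{A}-\B{W}\|_\text{F}^2$, which is your expansion read in the opposite direction; your version is more careful in stating that $\operatorname{Tr}(\B{W}^\top\B{W})=k$ holds only on the constraint set and that the positive rescaling preserves the minimizers.
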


\begin{proof}
 As $\B{A}^\top \B{A}$ and $\B{W}^\top\B{W}$ are constants, these terms can be incorporated into Eq.~\eqref{lemmaEq1}, yielding the following equation, which shares the same optimal solution as Eq.~\eqref{lemmaEq1}:
\begin{equation}
\label{lemmaEq3}
\begin{split}
  & \min_\B{W} \operatorname{\text{Tr}}(-\B{A}^\top \B{W}), \\
  & \Longleftrightarrow \min_\B{W}   \operatorname{\text{Tr}}(\B{A}^\top \B{A}-2\B{A}^\top \B{W}+\B{W}^\top \B{W}),\\
  & \Longleftrightarrow \min_\B{W}  \operatorname{\text{Tr}}( {(\B{A}-\B{W})}^\top(\B{A}-\B{W})), \\
  & \Longleftrightarrow \min_\B{W}  \|\B{A}-\B{W}\|_\text{F}^2, \\ 
  & s.t \quad \B{W}^\top\B{W}=\B{I}.
\end{split}
\end{equation}
\end{proof}

\begin{thm}
  \label{thm:equivalent}
When $\B{P}$ and $\B{F}$ are fixed, Eq.~\eqref{updateW} is equivalent to Eq.~\eqref{derive_W}.
\end{thm}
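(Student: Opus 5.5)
The plan is to reduce the objective of Eq.~\eqref{updateW} to a linear trace functional in $\B{W}$ on the feasible set $\{\B{W}:\B{W}^\top\B{W}=\B{I}\}$, and then invoke Lemma~\ref{lemma1}. The key observation is that every term quadratic in $\B{W}$ is constant on this set. Indeed, for fixed $\B{P}$, $\B{F}$, $\B{S}$ and any feasible $\B{W}$ we have $\operatorname{Tr}(\B{W}^\top\B{W})=k$ and $\operatorname{Tr}(\B{P}^\top\B{W}^\top\B{W}\B{P})=\operatorname{Tr}(\B{P}^\top\B{P})$. Consequently the quadratic parts contribute only constants, whatever the signs of their coefficients.

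Concretely, I would expand the three terms separately using $\|\B{X}\|_\text{F}^2=\operatorname{Tr}(\B{X}^\top\B{X})$ and the cyclic property of the trace:
\begin{equation*}
\|\B{Z}-\B{W}\B{P}\|_\text{F}^2=\operatorname{Tr}(\B{Z}^\top\B{Z})-2\operatorname{Tr}\bigl((\B{Z}\B{P}^\top)^\top\B{W}\bigr)+\operatorname{Tr}(\B{P}^\top\B{P}),
\end{equation*}
\begin{equation*}
\alpha\|\B{W}-\B{F}\|_\text{F}^2=\alpha\bigl(k-2\operatorname{Tr}(\B{F}^\top\B{W})+\|\B{F}\|_\text{F}^2\bigr),
\end{equation*}
\begin{equation*}
-\lambda\|\B{W}-\B{S}\|_\text{F}^2=-\lambda\bigl(k-2\operatorname{Tr}(\B{S}^\top\B{W})+\|\B{S}\|_\text{F}^2\bigr).
\end{equation*}
Summing these, the objective equals $c-2\operatorname{Tr}(\B{A}^\top\B{W})$, where $\B{A}=\B{Z}\B{P}^\top+\alpha\B{F}-\lambda\B{S}$ and $c$ does not depend on $\B{W}$. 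Discarding the constant $c$ and the positive factor $2$ does not change the set of minimizers. Hence Eq.~\eqref{updateW} has the same minimizers as $\min_{\B{W}^\top\B{W}=\B{I}}\operatorname{Tr}(-\B{A}^\top\B{W})$.

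Lemma~\ref{lemma1}, applied with this $\B{A}$, then turns the trace problem into $\min_{\B{W}^\top\B{W}=\B{I}}\|\B{A}-\B{W}\|_\text{F}^2$, which is exactly Eq.~\eqref{derive_W}. "Equivalent" is meant in the sense of identical minimizer sets; the optimal values differ only by a $\B{W}$-independent constant.

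The main point needing care is the term $-\lambda\|\B{W}-\B{S}\|_\text{F}^2$. It is concave in $\B{W}$, so without the constraint the problem would be unbounded below whenever $\lambda>\alpha$. The argument must therefore use $\B{W}^\top\B{W}=\B{I}$ explicitly, rather than completing a square in the unconstrained sense. I would state clearly that the $\operatorname{Tr}(\B{W}^\top\B{W})$ and $\operatorname{Tr}(\B{P}^\top\B{W}^\top\B{W}\B{P})$ terms are replaced by constants only on the feasible set. With that made explicit, the sign of $\lambda$ is irrelevant, and the remaining computations are routine.
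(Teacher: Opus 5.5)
Your proof is correct and follows the paper's route: expand each Frobenius term as a trace, use $\B{W}^\top\B{W}=\B{I}$ to make $\operatorname{Tr}(\B{P}^\top\B{W}^\top\B{W}\B{P})$ and $\operatorname{Tr}(\B{W}^\top\B{W})$ constant, then apply Lemma~\ref{lemma1} with $\B{A}=\B{Z}\B{P}^\top+\alpha\B{F}-\lambda\B{S}$ — and your bookkeeping is cleaner than the paper's, which leaves the $\operatorname{Tr}(\B{W}^\top\B{W})$ terms implicit and writes a garbled, dimensionally inconsistent coefficient in Eq.~\eqref{equiv:3}. One aside is inaccurate but harmless: the unconstrained quadratic part is $\operatorname{Tr}\bigl(\B{W}(\B{P}\B{P}^\top+(\alpha-\lambda)\B{I})\B{W}^\top\bigr)$, so the problem stays bounded below whenever $\lambda<\alpha+\lambda_{\min}(\B{P}\B{P}^\top)$, and $\lambda>\alpha$ by itself forces unboundedness only when $\B{P}$ is rank-deficient.
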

\begin{proof}
The Frobenius norm of matrix $\B{A}$ can be expressed in terms of the matrix trace as $\|\B{A}\|_\text{F}^2 = \operatorname{\text{Tr}}(\B{A}^\top \B{A})$. Consequently, Eq.~\eqref{updateW} can be reformulated as:
\hspace*{-5pt}
{
   \begin{equation}
    \label{equiv:1}
    \begin{split}
    &\min_\B{W} \text{Tr} ({\left( \B{Z}-\B{W}\B{P} \right)}^{\top}\left(\B{Z}-\B{W}\B{P} \right)) + \alpha \text{Tr} \left( \B{W}-\B{F}\right)^{\top}\left(\B{W}-\B{F} \right) \\
    &\min_\B{W} \text{Tr} ({\left( \B{Z}-\B{W}\B{P} \right)}^{\top}\left(\B{Z}-\B{W}\B{P} \right)) + \alpha \text{Tr} (\left( \B{W}-\B{F} \right)^{\top}\left(\B{W}-\B{F} \right)) \\
    &\quad\quad - \lambda \text{Tr}(\left(\B{W}-\B{S}\right)^{\top}\left(\B{W}-\B{S} \right) ), \\
    & s.t \quad \B{W}^\top\B{W}=\B{I}.
  \end{split}
\end{equation}}

Eq.~\eqref{equiv:1} is simplified as follows:
  {
  \begin{equation}
    \label{equiv:2}
    \begin{split}
    &\min_\B{W} \text{Tr}\left( {\B{Z}^\top}\B{Z}\right)-2\text{Tr}\left(\B{Z}^\top \B{W}\B{P}\right)+\text{Tr}\left((\B{W}\B{P})^\top\B{W}\B{P} \right)\\
    &\quad\quad +\alpha  \text{Tr}\left( {\B{F}^\top}\B{F}\right)-2\alpha \text{Tr}\left({\B{W}^\top}\B{F}\right)+ \alpha \text{Tr}\left({\B{W}^\top}\B{W} \right)\\
    &\quad\quad - \lambda \text{Tr}\left( {\B{S}^\top}\B{S}\right)+2 \lambda \text{Tr}\left({\B{W}^\top}\B{S}\right)- \lambda \text{Tr}\left({\B{W}^\top}\B{W} \right),\\
    & s.t \quad \B{W}^\top\B{W}=\B{I}.
   \end{split}
   \end{equation}}
 For Eq.~\eqref{equiv:2}, we derive the following two facts:
 \begin{itemize}
   \item Since $\B{W}^\top\B{W}=\B{I}$, we have: 
   \begin{equation*}
   \text{Tr}\left((\B{W}\B{P})^\top\B{W}\B{P} \right)=\text{Tr}\left(\B{P}^\top\B{W}^\top\B{W}\B{P} \right) = \operatorname{\text{Tr}}(\B{P}^\top \B{P}).
   \end{equation*}
    When $\B{P}$ and $\B{F}$ are fixed, $\operatorname{\text{Tr}}(\B{P}^\top \B{P})$ and $\operatorname{\text{Tr}}(\B{F}^\top \B{F})$ are constant.
   \item Since $\B{Z}$ and $\B{S}$ are given matrices, $\operatorname{\text{Tr}}(\B{Z}^\top \B{Z})$ and $\operatorname{\text{Tr}}(\B{S}^\top\B{S})$ are constants. 
 \end{itemize}
 Based on the above two facts, by eliminating the terms $\operatorname{\text{Tr}}(\B{Z}^\top \B{Z})$, $\operatorname{\text{Tr}}(\B{S}^\top\B{S})$, $\operatorname{\text{Tr}}(\B{P}^\top \B{P})$ and $\operatorname{\text{Tr}}(\B{F}^\top \B{F})$, Eq.~\eqref{equiv:2} can be further simplified as follows: 
  \begin{equation}
    \label{equiv:3}
    \begin{split}
    & \min_\B{W} \text{Tr} \left(-\left( \B{Z}^\top \B{P}+\alpha \B{F}^\top-\lambda{\B{W}\B{S}^\top} \right)^\top \B{W} \right),\\
    & s.t \quad \B{W}^\top\B{W}=\B{I}.
    \end{split}
  \end{equation}
  As shown in Lemma~\ref{lemma1}, Eq.~\eqref{equiv:3} is equivalent to Eq.~\eqref{derive_W}. Therefore, Eq.~\eqref{updateW} and Eq.~\eqref{derive_W} are also equivalent.
\end{proof}

\section{Additional Experiments on Identity Consistency for \MyMethod.}
\label{sec:identity_consistency}

\subsection{Experimental Design}
To analyze the performance of the semantic vectors computed by \MyMethod\ in preserving identity consistency during facial image editing, we use the identity consistency score (referred to as $IDS$), computed according to the following formula~\citep{StyleCLIP2021ICCV}:
\begin{equation}
  \label{IDScore}
  IDS^j(k)=1 - \mathcal{F}(M_{ori}^j(k)) \cdot \mathcal{F}(M_{edit}^j(k)),
  \end{equation}
where $M_{ori}^j(k)$ and $M_{edit}^j(k)$ are the $j$-th original facial images and the corresponding edited images generated by the pre-trained generator StyleGAN2. The notation $\mathcal{F}(\cdot)$ represents the pre-trained face recognition model SE-ResNet-50~\citep{SEResNet2018CVPR}. Lower $IDS$ indicates stronger preservation of identity consistency. The average $IDS$ is used as the final measure of identity consistency.\par
We compare four methods (\sefa, \GANspace, \enjoyGAN, and \InterFaceGAN) with \MyMethod\ to analyze identity changes during facial editing.
\subsection{Comparative Analysis} 
Fig.~\ref{fig:id_consistency} illustrates that identity consistency decreases as editing magnitude increases, with larger magnitudes leading to more significant changes. At the same level of editing magnitude, \MyMethod\ outperforms others in terms of Age attribute.\par
For Gender attribute, \MyMethod\ achieves higher identity similarity than \sefa, \GANspace, and \InterFaceGAN\ when the editing magnitude is less than $0.2$, and surpasses all other algorithms when it exceeds $0.4$. A similar result is observed for Hairstyle attribute. For Smile attribute, \MyMethod\ consistently outperforms \sefa, \InterFaceGAN, and \enjoyGAN\ when the magnitude exceeds $0.2$.\par
\begin{figure*}[h]
  \centering

  \includegraphics[width=1\textwidth]{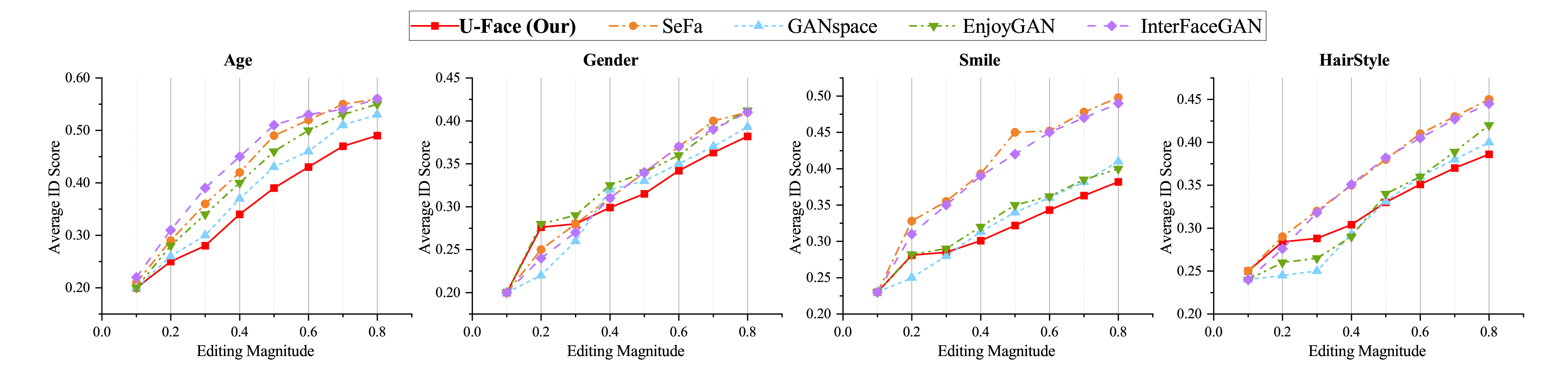}
  \setlength{\abovecaptionskip}{-10pt}
  \caption{Comparison of identity consistency preservation across \sefa, \GANspace, \enjoyGAN, \InterFaceGAN, and \MyMethod\ using the $IDS$ metric for the Age, Gender, Smile, and Hairstyle attributes. The editing magnitude $\beta$ varies from $0.1$ to $0.8$.}
~\label{fig:id_consistency}
\end{figure*}
In summary, the proposed \MyMethod\ framework maintains high identity consistency across Age, Gender, Hairstyle, and Smile attributes. Compared to the baseline methods, \MyMethod\ performs better as the editing magnitude increases, ensuring that the edited facial attributes remain well aligned with the original facial image. This is particularly crucial in applications such as virtual try-ons and character synthesis, where it guarantees that the edited face appears natural, realistic, and easily recognizable.

\section{The Calculation Method of The Pearson Correlation Coefficient.}
\label{sec:corrcompute}
We compute attribute scores for the $k$-th facial attribute on both the original facial image $M_{ori}^j(k)$ and the edited facial image $M_{edit}^j(k)$ using the $k$-th attribute classifier $C_k$~\citep{ResNet2016CVPR}. These scores are defined as $S_{ori}^j(k)=C_k\left(M_{ori}^j(k)\right)$ and $S_{edit}^j(k)=C_k(M_{edit}^j(k))$, respectively, with a difference $\Delta^j(k) = S_{ori}^j(k) - S_{edit}^j(k)$. These differences are then used to compute the Pearson correlation coefficient~\citep{Pearson1896Mathematical} between the $i$-th and $k$-th attributes, as follows:
\begin{equation}
  \label{corrcompute}
  corr(i,k) = \left| \frac{cov(\Delta(i), \Delta(k))}{\sigma(\Delta(i)) \sigma(\Delta(k))} \right|,
\end{equation}
where $\Delta(i)=\{ \Delta^1(i),\Delta^2(i),\ldots,\Delta^n(i) \}$ and $\Delta(k)=\{\Delta^1(k),\Delta^2(k),\ldots,\Delta^n(k)\}$ are the sets of attribute differences for $i$-th and $k$-th attributes, respectively, $n$ is the number of edited images. $\sigma$ is the standard deviation, $cov(.)$ represents the covariance, and $|\cdot|$ indicates the absolute value.\par

\section{Ethical Considerations}
{\textbf{Scope and intent.} U-Face is intended for research, virtual avatars, digital entertainment, and human-computer interaction. It is not designed for surveillance, biometric identification, or profiling.}\par
{\textbf{Data and privacy.} Our experiments rely on publicly available face datasets (e.g., FFHQ, CelebA-HQ) and GAN-generated images under their licenses. We do not collect private images. No biometric identification is performed, and all results are reported at an aggregate level.}\par
{\textbf{User study ethics.} For the user study (20 participants), all participants provided informed consent; responses were anonymous and no personal information or participant images were stored.}\par
{\textbf{Bias and fairness.} Face datasets may exhibit demographic imbalance; attribute editing (e.g., ``gender'') is treated as visual presentation rather than identity. We report limitations arising from dataset coverage and discuss failure modes under large edits or attribute composition in Section~\ref{sec:limitations}.}\par
{\textbf{Misuse risks and safeguards.} Facial attribute editing can be misused for identity manipulation or disinformation. We recommend, for deployment contexts, (i) consent requirements when editing real-person images, (ii) explicit disclosure that images are edited, (iii) content provenance or watermarking where feasible, and (iv) reasonable bounds on edit magnitude.}\par
{\textbf{Scope of intended use}. We intend to apply this method to research, avatars, and HCI applications, and not for surveillance or biometric profiling.}\par
{\textbf{Transparency.} We document the evaluation protocol and limitations; any future releases should include clear usage guidelines and respect dataset licenses and local regulations.}

\end{document}